\documentclass{article} 
\usepackage{times}
\usepackage{iclr2020_conference}

\usepackage{amsmath,amsfonts,bm}









\def\eqref#1{equation~\ref{#1}}









\def\1{\bm{1}}

\def\eps{{\epsilon}}










\DeclareMathAlphabet{\mathsfit}{\encodingdefault}{\sfdefault}{m}{sl}
\SetMathAlphabet{\mathsfit}{bold}{\encodingdefault}{\sfdefault}{bx}{n}













\DeclareMathOperator*{\argmax}{arg\,max}
\DeclareMathOperator*{\argmin}{arg\,min}

\usepackage{hyperref}
\usepackage{url}
\usepackage{YK}
\usepackage{algorithm,algpseudocode}
\usepackage{cancel}
\usepackage{graphicx,subcaption}
\usepackage{booktabs} 

\usepackage{wrapfig}

\def\DRO{\mathrm{DRO}}

\def\hA{\widehat{A}}
\def\hL{\widehat{L}}
\def\hlambda{\hat{\lambda}}
\def\hSigma{\widehat{\Sigma}}
\def\htheta{\hat{\theta}}
\def\reals{\bR} 
\def\symm{\bS}

\title{Training individually fair ML models with Sensitive Subspace Robustness}


\author{Mikhail Yurochkin \\
IBM Research\\
MIT-IBM Watson AI Lab \\
\texttt{mikhail.yurochkin@ibm.com} \\
\And
Amanda Bower$^\dagger$, Yuekai Sun$^\ddagger$\\
Department of Mathematics$^\dagger$ \\
Department of Statistics$^\ddagger$ \\
University of Michigan \\
\texttt{\{amandarg,yuekai\}@umich.edu} \\
}

%

\iclrfinalcopy 
\begin{document}

\maketitle

\begin{abstract}
We consider training machine learning models that are fair in the sense that their performance is invariant under certain sensitive perturbations to the inputs. For example, the performance of a resume screening system should be invariant under changes to the gender and/or ethnicity of the applicant. We formalize this notion of algorithmic fairness as a variant of individual fairness and develop a distributionally robust optimization approach to enforce it during training. We also demonstrate the effectiveness of the approach on two ML tasks that are susceptible to gender and racial biases.
\end{abstract}

\section{Introduction}

Machine learning (ML) models are gradually replacing humans in high-stakes decision making roles. For example, in Philadelphia, an ML model classifies probationers as high or low-risk \citep{metz2020Algorithm}. In North Carolina, ``analytics'' is used to report suspicious activity and fraud by Medicaid patients and providers \citep{metz2020Algorithm}. Although ML models appear to eliminate the biases of a human decision maker, they may perpetuate or even exacerbate biases in the training data \citep{barocas2016Big}. Such biases are especially objectionable when it adversely affects underprivileged groups of users \citep{barocas2016Big}.

In response, the scientific community has proposed many mathematical definitions of algorithmic fairness and approaches to ensure ML models satisfy the definitions. Unfortunately, this abundance of definitions, many of which are incompatible \citep{kleinberg2016Inherent,chouldechova2017Fair}, has hindered the adoption of this work by practitioners. There are two types of formal definitions of algorithmic fairness: group fairness and individual fairness. Most recent work on algorithmic fairness considers group fairness because it is more amenable to statistical analysis \citep{ritov2017conditional}. Despite their prevalence, group notions of algorithmic fairness suffer from certain shortcomings. One of the most troubling is there are many scenarios in which an algorithm satisfies group fairness, but its output is blatantly unfair from the point of view of individual users \citep{dwork2011Fairness}. 

In this paper, we consider individual fairness instead of group fairness. Intuitively, an individually fair ML model treats similar users similarly. Formally, an ML model is a map $h:\cX\to\cY$, where $\cX$ and $\cY$ are the input and output spaces. The leading notion of individual fairness is metric fairness \citep{dwork2011Fairness}; it requires
\begin{equation}
d_y(h(x_1),h(x_2)) \le Ld_x(x_1,x_2)\text{ for all }x_1,x_2\in\cX,
\label{eq:metricFairness}
\end{equation}
where $d_x$ and $d_y$ are metrics on the input and output spaces and $L\ge 0$ is a Lipschitz constant. The fair metric $d_x$ encodes our intuition of which samples should be treated similarly by the ML model. We emphasize that $d_x(x_1,x_2)$ being small does not imply $x_1$ and $x_2$ are similar in all respects. Even if $d_x(x_1,x_2)$ is small, $x_1$ and $x_2$ may differ in certain problematic ways, e.g. in their protected/sensitive attributes. This is why we refer to pairs of samples $x_1$ and $x_2$ such that $d_x(x_1,x_2)$ is small as \textit{comparable} instead of similar.


Despite its benefits, individual fairness was dismissed as impractical because there is no widely accepted fair metric for many ML tasks. 
Fortunately, there is a line of recent work on learning the fair metric from data \citep{ilvento2019Metric,wang2019Empirical}.
In this paper, we consider two data-driven choices of the fair metric: one for problems in which the sensitive attribute is reliably observed, and another for problems in which the sensitive attribute is unobserved (see Appendix \ref{sec:dataDriveFairMetric}). 

The rest of this paper is organized as follows. In Section \ref{sec:fairnessThruRobustness}, we cast individual fairness as a form of robustness: robustness to certain sensitive perturbations to the inputs of an ML model. This allows us to leverage recent advances in adversarial ML to train individually fair ML models. More concretely, we develop an approach to audit ML models for violations of individual fairness that is similar to adversarial attacks \citep{goodfellow2014Explaining} and an approach to train ML models that passes such audits (akin to adversarial training \citep{madry2017Deep}). We justify the approach theoretically (see Section \ref{sec:theory}) and empirically (see Section \ref{sec:computationalResults}).

\section{Fairness through (distributional) robustness}
\label{sec:fairnessThruRobustness}

To motivate our approach, imagine an auditor investigating an ML model for unfairness. The auditor collects a set of audit data and compares the output of the ML model on comparable samples in the audit data. For example, to investigate whether a resume screening system is fair, the auditor may collect a stack of resumes and change the names on the resumes of Caucasian applicants to names more common among the African-American population. If the system performs worse on the edited resumes, then the auditor may conclude the model treats African-American applicants unfairly. Such investigations are known as \textbf{correspondence studies}, and a prominent example is \citeauthor{bertrand2004Are}'s celebrated investigation of racial discrimination in the labor market. In a correspondence study, the investigator looks for inputs that are comparable to the training examples (the edited resumes in the resume screening example) on which the ML model performs poorly. In the rest of this section, we formulate an optimization problem to find such inputs.

\subsection{Fair Wasserstein distances} 
\label{sec:fair_w}

Recall $\cX$ and $\cY$ are the spaces of inputs and outputs. To keep things simple, we assume that the ML task at hand is a classification task, so $\cY$ is discrete. We also assume that we have a fair metric $d_x$ of the form 
\[
d_x(x_1,x_2)^2 \triangleq \langle x_1 - x_2,\Sigma(x_1 - x_2)\rangle^{\frac12},
\]
where $\Sigma\in\symm_+^{d\times d}$. For example, suppose we are given a set of $K$ ``sensitive'' directions that we wish the metric to ignore; \ie\ $d(x_1,x_2) \ll 1$ for any $x_1$ and $x_2$ such that $x_1 - x_2$ falls in the span of the sensitive directions. These directions may be provided by a domain expert or learned from data (see Section \ref{sec:computationalResults} and Appendix \ref{sec:dataDriveFairMetric}). In this case, we may choose $\Sigma$ as the orthogonal complement projector of the span of the sensitive directions. We equip $\cX$ with the fair metric and $\cZ \triangleq \cX\times\cY$ with
\[
d_z((x_1,y_1),(x_2,y_2)) \triangleq d_x(x_1,x_2) + \infty\cdot \ones\{y_1\ne y_2\}.
\]

We consider $d_z^2$ as a transport cost function on $\cZ$. This cost function encodes our intuition of which samples are comparable for the ML task at hand. We equip the space of probability distributions on $\cZ$ with the fair Wasserstein distance
\[\textstyle
W(P,Q) = \inf_{\Pi\in\cC(P,Q)}\int_{\cZ\times\cZ}c(z_1,z_2)d\Pi(z_1,z_2),
\]
where $\cC(P,Q)$ is the set of couplings between $P$ and $Q$. The fair Wasserstein distance inherits our intuition of which samples are comparable through the cost function; \ie\ the fair Wasserstein distance between two probability distributions is small if they are supported on comparable areas of the sample space.

\subsection{Auditing ML models for algorithmic bias}

To investigate whether an ML model performs disparately on comparable samples, the auditor collects a set of audit data $\{(x_i,y_i)\}_{i=1}^n$ and solves the optimization problem
\begin{equation}\textstyle
\max_{P:W(P,P_n) \le \eps} \textstyle \int_{\cZ}\ell(z,h)dP(z),
\label{eq:auditorsProblem}
\end{equation}
where $\ell:\cZ\times\cH\to\reals_+$ is a loss function, $h$ is the ML model, $P_n$ is the empirical distribution of the audit data, and $\eps > 0$ is a small tolerance parameter. We interpret $\eps$ as a moving budget that the auditor may expend to discover discrepancies in the performance of the ML model. This budget forces the auditor to avoid moving samples to incomparable areas of the sample space. We emphasize that \eqref{eq:auditorsProblem} detects \emph{aggregate} violations of individual fairness. In other words, although the violations that the auditor's problem detects are individual in nature, the auditor's problem is only able to detect aggregate violations. We summarize the implicit notion of fairness in \eqref{eq:auditorsProblem} in a definition.

\begin{definition}[distributionally robust fairness (DRF)]
An ML model $h:\cX\to\cY$ is $(\eps,\delta)$-distributionally robustly fair (DRF) WRT the fair metric $d_x$ iff
\begin{equation}\textstyle
\max_{P:W(P,P_n) \le \eps} \textstyle \int_{\cZ}\ell(z,h)dP(z) \le \delta.
\label{eq:correspondenceFairness}
\end{equation}
\end{definition}

Although \eqref{eq:auditorsProblem} is an infinite-dimensional optimization problem, it is possible to solve it exactly by appealing to duality. \citeauthor{blanchet2016Quantifying} showed that the dual of \eqref{eq:auditorsProblem} is 
\begin{equation}
\begin{aligned}
\textstyle\sup_{P:W(P,P_n) \le \eps}\Ex_P\big[\ell(Z,h)\big] = \inf_{\lambda \ge 0}\{\lambda\eps + \Ex_{P_n}\big[\ell_\lambda^c(Z,h)\big]\}, \\
\textstyle\ell_\lambda^c((x_i,y_i),h) \triangleq \sup_{x\in\cX}\ell((x,y_i),h) - \lambda d_x^2(x,x_i).
\end{aligned}
\label{eq:auditorsProblemDual}
\end{equation}
This is a univariate optimization problem, and it is amenable to stochastic optimization. We describe a stochastic approximation algorithm for \eqref{eq:auditorsProblemDual} in  Algorithm \ref{alg:DRObjDual}. Inspecting the algorithm, we see that it is similar to the PGD algorithm for adversarial attack. 

\begin{algorithm}[H]
\caption{stochastic gradient method for \eqref{eq:auditorsProblemDual}}
\label{alg:DRObjDual}
\begin{algorithmic}[1]
  \Require starting point $\hlambda_1$, step sizes $\alpha_t > 0$
  \Repeat
    \State draw mini-batch $(x_{t_1},y_{t_1}),\dots,(x_{t_B},y_{t_B})\sim P_n$
    \State $x_{t_b}^* \gets \argmax_{x\in\cX}\ell((x,y_{t_b}),h) - \lambda d_x^2(x_{t_b},x)$, $b\in[B]$
    \State $\hlambda_{t+1} \gets \max\{0,\hlambda_t - \alpha_t(\eps - \frac{1}{B}\sum_{b=1}^Bd_x^2(x_{t_b},x_{t_b}^*))\}$
  \Until{converged}
\end{algorithmic}
\end{algorithm}


It is known that the optimal point of \eqref{eq:auditorsProblem} is the discrete measure $\frac1n\sum_{i=1}^n\delta_{(T_\lambda(x_i),y_i)}$, where $T_\lambda:\cX\to\cX$ is the \textit{unfair map}
\begin{equation}
T_\lambda(x_i) \gets \argmax_{x\in\cX}\ell((x,y_i),h) - \lambda d_x^2(x,x_i).
\label{eq:unfairMap}
\end{equation}
We call $T_\lambda$ an unfair map because it reveals unfairness in the ML model by mapping samples in the audit data to comparable areas of the sample space that the system performs poorly on. We note that $T_\lambda$ may map samples in the audit data to areas of the sample space that are not represented in the audit data, thereby revealing disparate treatment in the ML model not visible in the audit data alone. We emphasize that $T_\lambda$ more than reveals disparate treatment in the ML model; it \textit{localizes} the unfairness to certain areas of the sample space.

We present a simple example to illustrating fairness through robustness (a similar example appeared in \cite{hashimoto2018Fairness}). Consider the binary classification dataset shown in Figure \ref{fig:toy}. There are two subgroups of observations in this dataset, and (sub)group membership is the protected attribute (\eg\ the smaller group contains observations from a minority subgroup). In Figure \ref{fig:baseline_decision} we see the decision heatmap of a vanilla logistic regression, which performs poorly on the blue minority subgroup. The two subgroups are separated in the horizontal direction, so the horizontal direction is the sensitive direction. Figure \ref{fig:baseline_map} shows that such classifier is unfair with respect to the corresponding fair metric, i.e. the \textit{unfair map} \eqref{eq:unfairMap} leads to significant loss increase by transporting mass along the horizontal direction with very minor change of the vertical coordinate.


\begin{figure*}[t]\centering
\begin{subfigure}{.32\textwidth}
  \centering
  \captionsetup{justification=centering}
\vspace{-.3in}
\includegraphics[width=\linewidth]{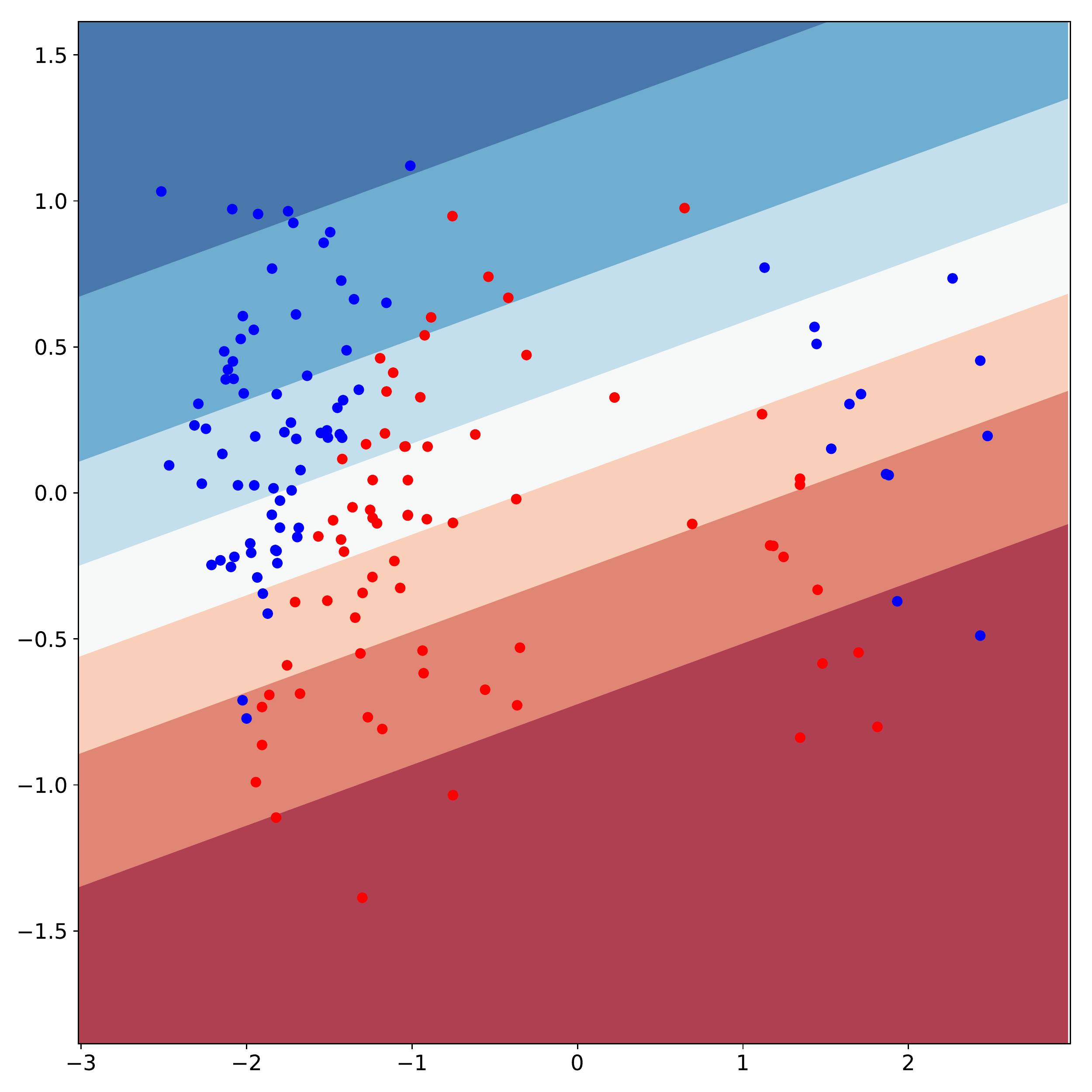}
\vspace{-.25in}
\caption{unfair classifier}
\label{fig:baseline_decision}
\end{subfigure}
\begin{subfigure}{.32\textwidth}
  \centering
  \captionsetup{justification=centering}
   \vspace{-.3in}
\includegraphics[width=\linewidth]{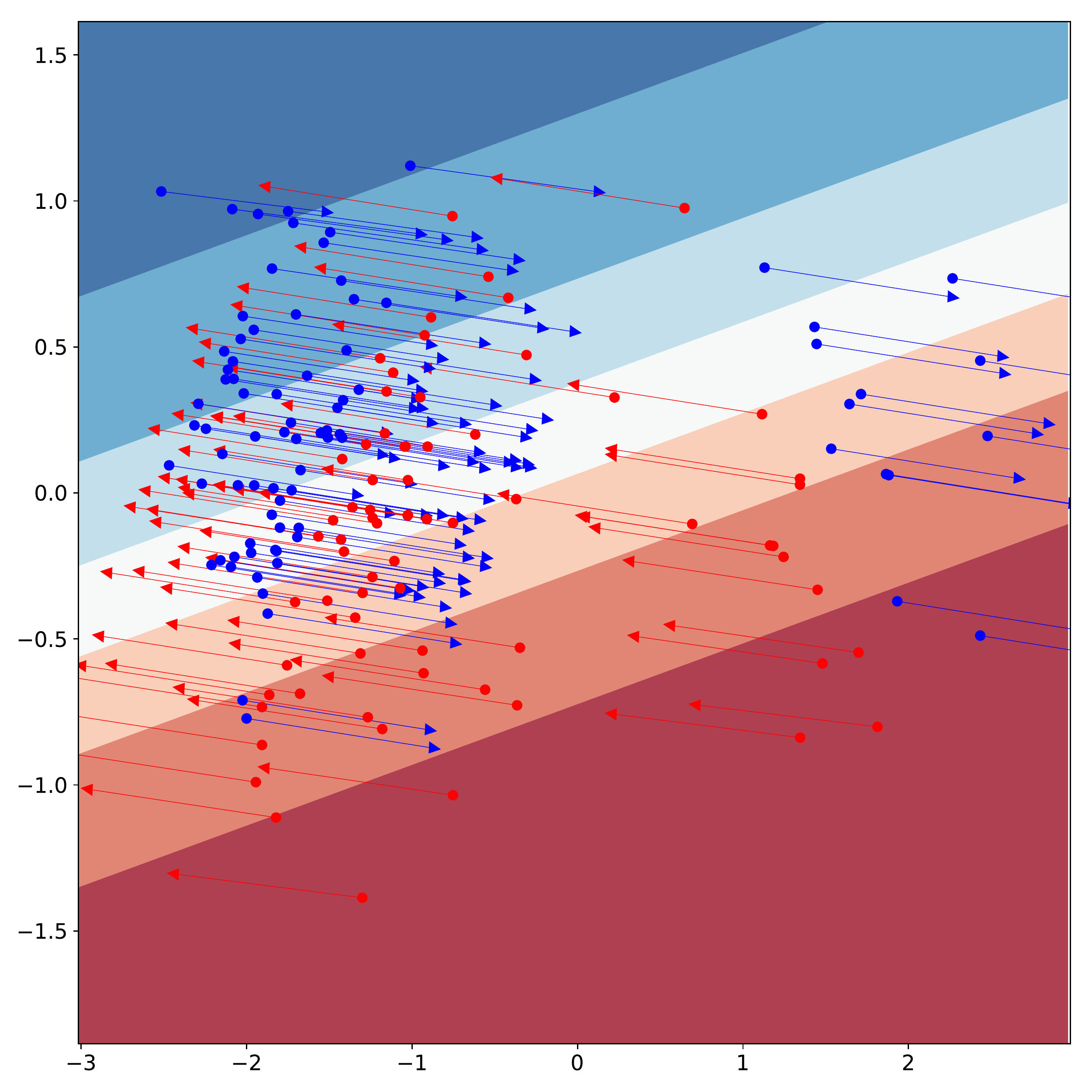}
\vspace{-.25in}
\caption{unfair map}
\label{fig:baseline_map}
\end{subfigure}
\begin{subfigure}{.32\textwidth}
  \centering
  \captionsetup{justification=centering}
\vspace{-.3in}
\includegraphics[width=\linewidth]{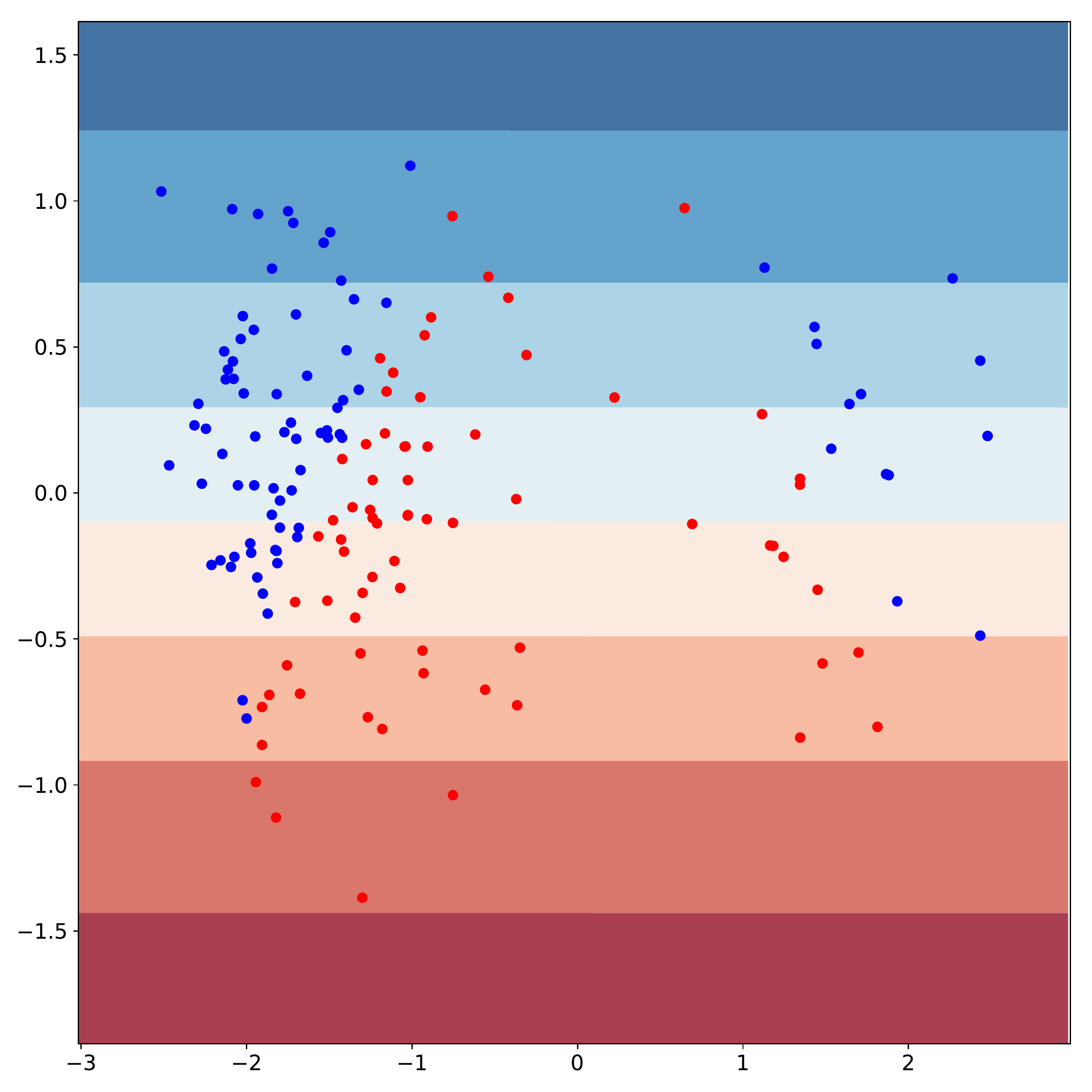}
\vspace{-.25in}
\caption{classifier from SenSR}
\label{fig:sensr_decision}
\end{subfigure}
\caption{Figure (a) depicts a binary classification dataset in which the minority group shown on the right of the plot is underrepresented. This tilts the logistic regression decision boundary in favor of the majority group on the left. Figure (b) shows the unfair map of the logistic regression decision boundary. It maps samples in the minority group towards the majority group. Figure (c) shows an algorithmically fair classifier that treats the majority and minority groups identically.}
\vspace{-.2in}
\label{fig:toy}
\end{figure*}

\paragraph{Comparison with metric fairness} Before moving on to training individually fair ML models, we compare DRF with metric fairness \eqref{eq:metricFairness}. Although we concentrate on the differences between the two definitions here, they are more similar than different: both formalize the intuition that the outputs of a fair ML model should perform similarly on comparable inputs. That said, there are two main differences between the two definitions. First, instead of requiring the output of the ML model to be similar on all inputs comparable to a training example, we require the output to be similar to the training label. Thus DRF not only enforces similarity of the output on comparable inputs, but also accuracy of the ML model on the training data. Second, DRF considers differences between datasets instead of samples by replacing the fair metric on inputs with the fair Wasserstein distance induced by the fair metric. The main benefits of this modifications are (i) it is possible to optimize \eqref{eq:auditorsProblem} efficiently, (ii) we can show this modified notion of individual fairness generalizes.

\subsection{Fair training with Sensitive Subspace Robustness}

We cast the fair training problem as training supervised learning systems that are robust to sensitive perturbations. We propose solving the minimax problem
\begin{equation}
\inf_{h\in\cH}\sup_{P:W(P,P_n) \le \eps}\Ex_P\big[\ell(Z,h)\big] = \inf_{h\in\cH}\inf_{\lambda\ge 0}\lambda\eps + \Ex_{P_n}\big[\ell_{\lambda}^c(Z,h)\big],
\label{eq:SenSR}
\end{equation}
where $\ell_\lambda^c$ is defined in \eqref{eq:auditorsProblemDual}.
This is an instance of a distributionally robust optimization (DRO) problem, and it inherits some of the statistical properties of DRO. To see why \eqref{eq:SenSR} encourages individual fairness, recall the loss function is a measure of the performance of the ML model. By assessing the performance of an ML model by its worse-case performance on hypothetical populations of users with perturbed sensitive attributes, minimizing \eqref{eq:SenSR} ensures the system performs well on all such populations. In our toy example, minimizing \eqref{eq:SenSR} implies learning a classifier that is insensitive to perturbations along the horizontal (i.e. sensitive) direction. In Figure \ref{fig:sensr_decision} this is achieved by the algorithm we describe next.

To keep things simple, we assume the hypothesis class is parametrized by $\theta\in\Theta\subset\reals^d$ and replace the minimization with respect to $\cH$ by minimization with respect to $\theta$. In light of the similarities between the DRO objective function and adversarial training, we borrow algorithms for adversarial training \citep{madry2017Deep} to solve \eqref{eq:SenSR} (see Algorithm \ref{alg:SenSR}).

\begin{algorithm}[H]
\caption{Sensitive Subspace Robustness (SenSR)}
\label{alg:SenSR}
\begin{algorithmic}[1]
  \Require starting point $\hat{\theta}_1$, step sizes $\alpha_t,\beta_t > 0$
  \Repeat
  \State sample mini-batch $(x_1,y_1),\ldots,(x_B,y_B)\sim P_n$
  \State $x_{t_b}^* \gets \argmax_{x\in\cX}\ell((x,y_{t_b}),\theta) - \hlambda_td_x^2(x_{t_b},x)$, $b\in[B]$
  \State $\hlambda_{t+1} \gets \max\{0,\hlambda_t - \alpha_t(\eps - \frac{1}{B}\sum_{b=1}^Bd_x^2(x_{t_b},x_{t_b}^*))\}$
  \State $\hat{\theta}_{t+1} \gets \hat{\theta}_t - \frac{\beta_t}{B}\sum_{b=1}^B\partial_\theta\ell((x_{t_b}^*, y_{t_b}),\hat{\theta}_t)$
  \Until{converged}
\end{algorithmic}
\end{algorithm}

\paragraph{Related work} Our approach to fair training is an instance of distributionally robust optimization (DRO). In DRO, the usual sample-average approximation of the expected cost function is replaced by $\hL_{\DRO}(\theta) \triangleq \sup_{P\in\cU}\Ex_P\big[\ell(Z,\theta)\big]$, where $\cU$ is a (data dependent) uncertainty set of probability distributions. The uncertainty set may be defined by moment or support constraints \citep{chen2007Robust,delage2010Distributionally,goh2010Distributionally},  $f$-divergences \citep{ben-tal2012Robust,lam2015Quantifying,miyato2015Distributional,namkoong2016Stochastic}, and Wasserstein distances \citep{shafieezadeh-abadeh2015Distributionally,blanchet2016Robust,esfahani2015Datadriven,lee2017Minimax,sinha2017Certifying}. Most similar to our work is \citet{hashimoto2018Fairness}: they show that DRO with a $\chi^2$-neighborhood of the training data prevents representation disparity, i.e. minority groups tend to suffer higher losses because the training algorithm ignores them.
One advantage of picking a Wasserstein uncertainty set is the set depends on the geometry of the sample space. This allows us to encode the correct notion of individual fairness for the ML task at hand in the Wasserstein distance. 

Our approach to fair training is also similar to adversarial training \citep{madry2017Deep}, which hardens ML models against adversarial attacks by minimizing adversarial losses of the form $\sup_{u\in\cU}\ell(z + u,\theta)$, where $\cU$ is a set of allowable perturbations \citep{szegedy2013Intriguing,goodfellow2014Explaining,papernot2015Limitations,carlini2016Evaluating,kurakin2016Adversarial}. Typically, $\cU$ is a scaled $\ell_p$-norm ball: $\cU = \{u:\|u\|_p \le \eps\}$. Most similar to our work is \cite{sinha2017Certifying}: they consider an uncertainty set that is a Wasserstein neighborhood of the training data. 

There are a few papers that consider adversarial approaches to algorithmic fairness. \cite{zhang2018Mitigating} propose an adversarial learning method that enforces equalized odds in which the adversary learns to predict the protected attribute from the output of the classifier. \citet{edwards2015Censoring} propose an adversarial method for learning classifiers that satisfy demographic parity. \citet{madras2018Learning} generalize their method to learn classifiers that satisfy other (group) notions of algorithmic fairness. \cite{garg2019counterfactual} propose to use adversarial logit pairing \citep{kannan2018adversarial} to achieve fairness in text classification using a pre-specified list of counterfactual tokens.

\section{SenSR trains individually fair ML models}
\label{sec:theory}

One of the main benefits of our approach is it provably trains individually fair ML models. Further, it is possible for the learner to certify that an ML model is individually fair \emph{a posteriori}. As we shall see, both are consequences of uniform convergence results for the DR loss class. More concretely, we study how quickly the uniform convergence error
\begin{equation}\textstyle
\delta_n \triangleq \sup_{\theta\in\Theta}\left\{\left|\sup_{P:W_*(P,P_*) \le \eps}\Ex_P\big[\ell(Z,\theta)\big] - \sup_{P:W(P,P_n) \le \eps}\Ex_P\big[\ell(Z,\theta)\big]\right|\right\},
\label{eq:uniformConvergenceError}
\end{equation}
where $W_*$ is the Wasserstein distance on $\Delta(\cZ)$ with a transportation cost function $c_*$ that is possibly different from $c$, vanishes, and $P^*$ is the true distribution on $\mathcal{Z}$. We permit some discrepancy in the (transportation) cost function to study the effect of a data-driven choice of $c$. In the rest of this section, we regard $c_*$ as the exact cost function and $c$ as a cost function learned from human supervision. We start by stating our assumptions on the ML task:
\begin{enumerate}
\item[(A1)] the feature space $\cX$ is bounded: $D\triangleq\max\{\diam(\cX),\diam_*(\cX)\} < \infty$;
\item[(A2)] the functions in the loss class $\cL = \{\ell(\cdot,\theta):\theta\in\Theta\}$ are non-negative and bounded: $0 \le \ell(z,\theta) \le M$ for all $z\in\cZ$ and $\theta\in\Theta$, and $L$-Lipschitz with respect to $d_x$: 
\[\textstyle
\sup_{\theta\in\Theta}\{\sup_{(x_1,y),(x_2,y)\in\cZ}|\ell((x_1,y),\theta) - \ell((x_2,y),\theta)|\} \le Ld_x(x_1,x_2);
\]
\item[(A3)] the discrepancy in the (transportation) cost function is uniformly bounded: 
\[\textstyle
\sup_{(x_1,y),(x_2,y)\in\cZ}|c((x_1,y),(x_2,y)) - c_*((x_1,y),(x_2,y))| \le \delta_cD^2.
\]
\end{enumerate}
Assumptions A1 and A2 are standard (see \cite[Assumption 1, 2, 3]{lee2017Minimax}) in the DRO literature. We emphasize that the constant $L$ in Assumption A2 is \textbf{not} the constant $L$ in the definition of metric fairness; it may be much larger. Thus most models that satisfy the conditions of the loss class are not individually fair in a meaningful sense. 

Assumption A3 deserves further comment. Under A1, A3 is mild. For example, if the exact fair metric is 
\[
d_x(x_1,x_2) = (x_1 - x_2)^T\Sigma_*(x_1 - x_2)^{\frac12}, 
\]
then the error in the transportation cost function is at most
\[
\begin{aligned}
&|c((x_1,y),(x_2,y)) - c_*((x_1,y),(x_2,y))| \\
&\quad= |(x_1 - x_2)^T\Sigma(x_1 - x_2) - (x_1 - x_2)^T\Sigma_*(x_1 - x_2)| \\
&\quad\textstyle\le D^2 \|\Sigma - \Sigma_*\|_2,
\end{aligned}
\]
We see that the error in the transportation cost function vanishes in the large-sample limit as long as $\Sigma$ is a consistent estimator of $\Sigma_*$. 

We state the uniform convergence result in terms of the \textit{entropy integral} of the loss class: $\mathfrak{C}(\cL) = \int_0^\infty\sqrt{\log N_\infty(\cL,r)}dr$, where $N_\infty(\cL,r)$ as the $r$-covering number of the loss class in the uniform metric. The entropy integral is a measure of the complexity of the loss class.

\begin{proposition}[uniform convergence]
\label{prop:uniformConvergence}
Under Assumptions A1--A3, \eqref{eq:uniformConvergenceError} satisfies
\begin{equation}
\delta_n \le \frac{48\mathfrak{C}(\cL)}{\sqrt{n}} + \frac{48LD^2}{\sqrt{n\eps}} + \frac{L\delta_cD^2}{\sqrt{\eps}} + M(\frac{\log\frac2t}{2n})^{\frac12}
\label{eq:uniformConvergenceErrorBound}
\end{equation}
with probability at least $1-t$.
\end{proposition}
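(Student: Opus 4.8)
The plan is to bound $\delta_n$ by splitting it into one piece that isolates the discrepancy between the learned cost $c$ and the true cost $c_*$, and a second piece that is a genuine statistical uniform-convergence error. Writing both inner suprema through the strong dual \eqref{eq:auditorsProblemDual}, I set $S_*(\theta)=\inf_{\lambda\ge0}\{\lambda\eps+\Ex_{P_*}[\ell_\lambda^{c_*}(Z,\theta)]\}$ and $S_n(\theta)=\inf_{\lambda\ge0}\{\lambda\eps+\Ex_{P_n}[\ell_\lambda^{c}(Z,\theta)]\}$, and introduce the hybrid $S_*^c(\theta)=\inf_{\lambda\ge0}\{\lambda\eps+\Ex_{P_*}[\ell_\lambda^{c}(Z,\theta)]\}$ that uses the true distribution but the learned cost. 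Then $\delta_n\le\sup_\theta|S_*(\theta)-S_*^c(\theta)|+\sup_\theta|S_*^c(\theta)-S_n(\theta)|$, and I bound the two terms separately, using the elementary fact $|\inf_\lambda f(\lambda)-\inf_\lambda g(\lambda)|\le\sup_\lambda|f(\lambda)-g(\lambda)|$ throughout.

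The crucial preliminary step, needed for both terms, is to show that only a bounded range of dual variables matters: the minimizing $\lambda$ in each dual problem lies in $[0,\Lambda]$ with $\Lambda=L/\sqrt{\eps}$. This follows from Assumption A2 coupled with the quadratic cost: $L$-Lipschitzness of $\ell$ forces the maximizer $x^*$ defining $\ell_\lambda^c$ to satisfy $\lambda\, d_x^2(x^*,x_0)\le L\, d_x(x^*,x_0)$, hence $d_x^2(x^*,x_0)\le L^2/\lambda^2$; since the $\lambda$-derivative of the dual objective is $\eps-\Ex[d_x^2(x^*,x_0)]$, it is positive once $\lambda>L/\sqrt{\eps}$, so each infimum is attained in $[0,\Lambda]$. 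With this in hand the cost-discrepancy term is immediate: $|\ell_\lambda^{c}-\ell_\lambda^{c_*}|\le\lambda\sup|c-c_*|\le\lambda\,\delta_cD^2$ by A3, so $\sup_\theta|S_*(\theta)-S_*^c(\theta)|\le\Lambda\,\delta_cD^2=L\delta_cD^2/\sqrt{\eps}$, which is the third term of \eqref{eq:uniformConvergenceErrorBound}.

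For the statistical term, the $\lambda$-restriction reduces it to the empirical process $\sup_\theta\sup_{0\le\lambda\le\Lambda}|\Ex_{P_*}[\ell_\lambda^c(Z,\theta)]-\Ex_{P_n}[\ell_\lambda^c(Z,\theta)]|$ over the surrogate loss class $\cL^c_\Lambda=\{\ell_\lambda^c(\cdot,\theta):\theta\in\Theta,\ \lambda\in[0,\Lambda]\}$. Since $0\le\ell_\lambda^c\le M$ (taking $x=x_0$ gives the lower bound, dropping the penalty the upper bound), McDiarmid's bounded-differences inequality with increments $M/n$ yields the concentration term $M(\log\tfrac2t/2n)^{1/2}$ and leaves the expected supremum to be controlled. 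Symmetrization followed by Dudley's entropy integral bounds that expectation by $O(n^{-1/2})\int_0^\infty\sqrt{\log N_\infty(\cL^c_\Lambda,r)}\,dr$, and the covering number factorizes: the map $\ell\mapsto\ell_\lambda^c$ is a $1$-Lipschitz contraction in the uniform norm (both sides are suprema of functions differing pointwise by the same amount), while $\lambda\mapsto\ell_\lambda^c$ is $D^2$-Lipschitz since its derivative $-d_x^2(x^*,x_0)$ is bounded by $D^2$ via A1; hence $N_\infty(\cL^c_\Lambda,r)\le\frac{2\Lambda D^2}{r}\,N_\infty(\cL,r/2)$. Splitting $\sqrt{\log(ab)}\le\sqrt{\log a}+\sqrt{\log b}$, the $N_\infty(\cL,r/2)$ factor integrates to $O(\mathfrak{C}(\cL))$ and the $\frac{2\Lambda D^2}{r}$ factor integrates to $O(\Lambda D^2)=O(LD^2/\sqrt{\eps})$; dividing by $\sqrt n$ reproduces the first two terms $48\mathfrak{C}(\cL)/\sqrt n$ and $48LD^2/\sqrt{n\eps}$, the constant $48$ being the accumulated symmetrization/Dudley factor.

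I expect the main obstacle to be the a priori bound $\lambda^\star\le L/\sqrt{\eps}$ on the dual variable: without it the cost-discrepancy estimate $\sup_\lambda\lambda\delta_cD^2$ is vacuous and the surrogate class is not totally bounded in the $\lambda$ direction, so the entropy integral diverges. Both the finite cost-discrepancy term and the finite $\lambda$-covering contribution hinge on correctly coupling the Lipschitz Assumption A2 with the quadratic structure of the transport cost. The remaining work is bookkeeping: verifying the envelope-theorem identity $\partial_\lambda\ell_\lambda^c=-d_x^2(x^*,x_0)$ and tracking constants through symmetrization and Dudley's bound to land exactly on $48$.
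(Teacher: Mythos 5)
Your proposal is correct and takes essentially the same approach as the paper: dualize both robust objectives, establish the a priori bound $\lambda \le L/\sqrt{\eps}$ on the optimal dual variables (the paper's Lemma~\ref{lem:lambdaBound}, proved there by comparing objective values rather than by your first-order/envelope argument), use A3 to extract the $L\delta_c D^2/\sqrt{\eps}$ term, and control the remaining empirical process over the dual loss class $\{\ell_\lambda^c(\cdot,\theta):\theta\in\Theta,\ \lambda\in[0,L/\sqrt{\eps}]\}$ via bounded differences, symmetrization, and Dudley's entropy integral with the same factorized covering-number bound. Your only organizational deviation --- introducing the hybrid objective $S_*^c$ and invoking $|\inf_\lambda f-\inf_\lambda g|\le\sup_{\lambda\in[0,\Lambda]}|f-g|$ on the restricted dual range, where the paper instead plugs each problem's minimizer ($\lambda_n$ or $\lambda_*$) into the other problem --- is immaterial and lands on the same constants.
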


We note that Proposition \ref{prop:uniformConvergence} is similar to the generalization error bounds by \citet{lee2017Minimax}. The main novelty in Proposition \ref{prop:uniformConvergence} is allowing error in the transportation cost function. We see that the discrepancy in the transportation cost function may affect the rate at which the uniform convergence error vanishes: it affects the rate if $\delta_c$ is $\omega_P(\frac{1}{\sqrt{n}})$. 

A consequence of uniform convergence is SenSR trains individually fair classifiers (if there are such classifiers in the hypothesis class). By individually fair ML model, we mean an ML model that has a small gap
\begin{equation}\textstyle
\sup_{P:W_*(P,P_*) \le \eps}\Ex_P\big[\ell(Z,\theta)\big] - \Ex_{P_*}\big[\ell(Z,\theta)\big],
\label{eq:performacneGap}
\end{equation}
The gap is the difference between the optimal value of the auditor's optimization problem \eqref{eq:auditorsProblem} and the (non-robust) risk. A  small gap implies the auditor cannot significantly increase the loss by moving samples from $P_*$ to comparable samples.

\begin{proposition}
\label{prop:individuallyFair}
Under the assumptions A1--A3, as long as there is $\bar{\theta}\in\Theta$ such that 
\begin{equation}\textstyle
\sup_{P:W_*(P,P_*) \le \eps}\Ex_P\big[\ell(Z,\bar{\theta})\big]\le \delta^*
\label{eq:fairClassifierExists}
\end{equation}
for some $\delta^* > 0$, $\htheta\in\argmin_{\theta\in\Theta}\sup_{P:W(P,P_n) \le \eps}\Ex_P\big[\ell(Z,\theta)\big]$ satisfies
\[\textstyle
\sup_{P:W_*(P,P_*) \le \eps}\Ex_P\big[\ell(Z,\htheta)\big] - \Ex_{P_*}\big[\ell(Z,\htheta)\big] \le \delta^* + 2\delta_n,
\]
where $\delta_n$ is the uniform convergence error \eqref{eq:uniformConvergenceError}.
\end{proposition}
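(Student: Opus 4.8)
The plan is to run the standard oracle-inequality argument for empirical risk minimization, with the two robust risks playing the roles of population and empirical objectives and Proposition~\ref{prop:uniformConvergence} supplying the uniform deviation bound. Write $R_*(\theta) \triangleq \sup_{P:W_*(P,P_*)\le\eps}\Ex_P[\ell(Z,\theta)]$ for the true robust risk and $R_n(\theta) \triangleq \sup_{P:W(P,P_n)\le\eps}\Ex_P[\ell(Z,\theta)]$ for the empirical robust risk, so that $\htheta$ minimizes $R_n$ over $\Theta$ and the definition of $\delta_n$ in \eqref{eq:uniformConvergenceError} is exactly the statement that $|R_*(\theta)-R_n(\theta)|\le\delta_n$ holds simultaneously for every $\theta\in\Theta$ (on the probability-$(1-t)$ event furnished by Proposition~\ref{prop:uniformConvergence}).

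First I would dispose of the non-robust term. Because the loss is non-negative by Assumption A2, $\Ex_{P_*}[\ell(Z,\htheta)]\ge 0$, so the performance gap in \eqref{eq:performacneGap} evaluated at $\htheta$ is bounded above by $R_*(\htheta)$ alone; it therefore suffices to prove $R_*(\htheta)\le\delta^*+2\delta_n$.

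The heart of the argument is a four-link chain that transfers optimality from the empirical objective back to the true one. Starting from $R_*(\htheta)$, I would (i) pass to the empirical risk by uniform convergence, $R_*(\htheta)\le R_n(\htheta)+\delta_n$; (ii) use that $\htheta$ minimizes $R_n$ and that $\bar{\theta}\in\Theta$ is feasible, giving $R_n(\htheta)\le R_n(\bar{\theta})$; (iii) pass back to the true risk by uniform convergence again, $R_n(\bar{\theta})\le R_*(\bar{\theta})+\delta_n$; and (iv) invoke the assumed bound \eqref{eq:fairClassifierExists}, $R_*(\bar{\theta})\le\delta^*$. Concatenating these four inequalities yields $R_*(\htheta)\le\delta^*+2\delta_n$, which combined with the reduction above gives the claim.

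I expect no real obstacle at this level: all of the probabilistic and geometric difficulty has already been absorbed into Proposition~\ref{prop:uniformConvergence}, and what remains is the textbook ``uniform convergence plus optimality'' decomposition. The only points needing care are that $\bar{\theta}$ is a legitimate competitor in the empirical problem (automatic, since both problems range over the same $\Theta$), so step~(ii) is valid, and that a single high-probability event must be shared across steps~(i) and~(iii)---this is precisely why $\delta_n$ is defined as a supremum over $\theta\in\Theta$ rather than as a pointwise deviation, so that one invocation of the proposition controls the deviation at both $\htheta$ and $\bar{\theta}$ at once.
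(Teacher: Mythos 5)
Your proposal is correct and follows essentially the same argument as the paper: reduce to bounding the robust risk alone via non-negativity of the loss, then chain uniform convergence, optimality of $\htheta$ against the competitor $\bar{\theta}$, uniform convergence again, and the assumption \eqref{eq:fairClassifierExists}. Your added remarks about sharing a single high-probability event and the supremum-over-$\Theta$ definition of $\delta_n$ are sound clarifications of points the paper leaves implicit.
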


Proposition \ref{prop:individuallyFair} guarantees Algorithm \ref{alg:SenSR} trains an individually fair ML model. More precisely, if there are models in $\cH$ that are (i) individually fair and (ii) achieve small test error, then Algorithm \ref{alg:SenSR} trains such a model. It is possible to replace \eqref{eq:fairClassifierExists} with other conditions, but a condition to its effect cannot be dispensed with entirely. If there are no individually fair models in $\cH$, then it is not possible for \eqref{eq:SenSR} to learn an individually fair model. If there are individually fair models in $\cH$, but they all perform poorly, then the goal of learning an individually fair model is futile.

Another consequence of uniform convergence is \eqref{eq:performacneGap} is close to its empirical counterpart
\begin{equation}\textstyle
\sup_{P:W(P,P_n) \le \eps}\Ex_P\big[\ell(Z,\theta)\big] - \Ex_{P_n}\big[\ell(Z,\theta)\big].
\label{eq:empiricalPerformacneGap}
\end{equation}
In other words, the gap \textit{generalizes}. This implies \eqref{eq:empiricalPerformacneGap} is a \textit{certificate of individual fairness}; \ie\ it is possible for practitioners to check whether an ML model is individually fair by evaluating \eqref{eq:empiricalPerformacneGap}. 

\begin{proposition}
\label{prop:certificate}
Under the assumptions A1--A3, for any $\eps > 0$, 
\[
\begin{aligned}
&\textstyle\sup_{\theta\in\Theta}\left\{\sup_{P:W(P,P_n) \le \eps}\Ex_P\big[\ell(Z,\theta)\big] - \Ex_{P_n}\big[\ell(Z,\theta)\big] - (\sup_{P:W_*(P,P_*) \le \eps}\Ex_P\big[\ell(Z,\theta)\big] - \Ex_{P_*}\big[\ell(Z,\theta)\big]\big)\right\} \\
&\quad\le 2\delta_n\text{ w.p. at least }1-t.
\end{aligned}
\]
\end{proposition}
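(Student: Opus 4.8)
The plan is to reduce this two-sided ``gap generalization'' statement to the one-sided uniform convergence error $\delta_n$ of \eqref{eq:uniformConvergenceError} by an elementary regrouping of four risks, and then to control the single non-robust term that is left over by a standard empirical-process bound. Throughout, abbreviate the robust empirical and population risks by $R_n(\theta) \triangleq \sup_{P:W(P,P_n)\le\eps}\Ex_P[\ell(Z,\theta)]$ and $R_*(\theta) \triangleq \sup_{P:W_*(P,P_*)\le\eps}\Ex_P[\ell(Z,\theta)]$, and the non-robust risks by $\widehat{L}_n(\theta) \triangleq \Ex_{P_n}[\ell(Z,\theta)]$ and $L_*(\theta)\triangleq\Ex_{P_*}[\ell(Z,\theta)]$. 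The empirical and population performance gaps are then $G_n(\theta) = R_n(\theta) - \widehat{L}_n(\theta)$ and $G_*(\theta) = R_*(\theta) - L_*(\theta)$, and the target inequality is $\sup_{\theta\in\Theta}\{G_n(\theta) - G_*(\theta)\}\le 2\delta_n$.

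First I would record the algebraic identity
\[
G_n(\theta) - G_*(\theta) = \big(R_n(\theta) - R_*(\theta)\big) + \big(L_*(\theta) - \widehat{L}_n(\theta)\big),
\]
obtained by regrouping the four risks. Since only an upper bound on $G_n-G_*$ is sought (the statement has no outer absolute value), it suffices to upper bound each bracketed term by $\delta_n$. The first term is immediate: by the definition of the uniform convergence error in \eqref{eq:uniformConvergenceError}, $R_n(\theta) - R_*(\theta) \le |R_n(\theta) - R_*(\theta)| \le \delta_n$ for every $\theta\in\Theta$, deterministically once $\delta_n$ is read as the realized value of \eqref{eq:uniformConvergenceError}.

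The second term $L_*(\theta) - \widehat{L}_n(\theta)$ is an ordinary one-sided uniform deviation of the loss class $\cL$. I would bound $\sup_{\theta}\{L_*(\theta) - \widehat{L}_n(\theta)\}$ by the usual chain of symmetrization, Dudley's entropy integral expressed through $\mathfrak{C}(\cL)$, and McDiarmid's inequality (using $0\le\ell\le M$ from A2), which yields a bound of order $\mathfrak{C}(\cL)/\sqrt{n} + M\sqrt{\log(2/t)/2n}$ with probability at least $1-t$. The key observation is that this is exactly the $\eps$-free, transport-free part of the bound on $\delta_n$ in Proposition \ref{prop:uniformConvergence}, the robust version of which only adds the nonnegative terms $48LD^2/\sqrt{n\eps}$ and $L\delta_cD^2/\sqrt{\eps}$. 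Hence this non-robust deviation is itself at most $\delta_n$ on the same high-probability event, giving $\sup_{\theta}\{L_*(\theta) - \widehat{L}_n(\theta)\}\le\delta_n$. Summing the two bounds and taking the supremum over $\theta$ delivers $\sup_{\theta}\{G_n - G_*\}\le 2\delta_n$ with probability at least $1-t$.

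The main obstacle is precisely the treatment of the second term: unlike the first, it is genuinely random and is \emph{not} dominated by $|R_n - R_*|$ pointwise (for instance, when $\eps$ is large both robust risks nearly coincide while the non-robust sampling error persists), so a separate concentration argument cannot be avoided. The care required is to route this argument through the very same event and empirical-process estimates already used to prove Proposition \ref{prop:uniformConvergence}, so that the failure probability stays at $t$ rather than doubling and the final bound is expressed in terms of the single quantity $\delta_n$. This is exactly what produces the clean factor of $2$ and the $1-t$ confidence level.
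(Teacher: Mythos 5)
Your proposal is correct and takes essentially the same route as the paper: the identical regrouping of the four risks into $\big(R_n(\theta)-R_*(\theta)\big)+\big(L_*(\theta)-\widehat{L}_n(\theta)\big)$, with the first bracket bounded deterministically by $\delta_n$ straight from the definition in \eqref{eq:uniformConvergenceError}, and the leftover non-robust deviation $L_*-\widehat{L}_n$ handled by a standard symmetrization/entropy-integral/bounded-differences argument. The only difference is explicitness: the paper dispatches that second term with the single remark that it ``is possible to bound \dots by standard uniform convergence results on bounded loss classes,'' whereas you spell out why this term is controlled by the same quantities (and on the same event, at the same confidence level $1-t$) that appear in the bound of Proposition \ref{prop:uniformConvergence}, which is precisely what yields the factor $2$.
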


\section{Computational results}
\label{sec:computationalResults}

In this section, we present results from using SenSR to train individually fair ML models for two tasks: sentiment analysis and income prediction. We pick these two tasks to demonstrate the efficacy of SenSR on problems with structured (income prediction) and unstructured (sentiment analysis) inputs and in which the sensitive attribute (income prediction) is observed and unobserved (sentiment analysis). We refer to Appendix \ref{supp:sensr_implementation} and \ref{sec:adultExperimentDetails} for the implementation details.

\subsection{Fair sentiment prediction with word embeddings}
\label{sec:sentiment}

\begin{table}[h]
\begin{minipage}{0.6\linewidth}
\caption{Sentiment prediction experiments over 10 restarts}
\vspace{-0.1in}
\label{table:sentiment}
\begin{tabular}{llll|l}
\toprule
{} &  \hspace{-0.15in} Acc.,\% &   \hspace{-0.05in}    Race gap &  \hspace{-0.1in}   Gend. gap &    Cuis. gap \\
\midrule
SenSR      &  94$\pm$1 &  0.30$\pm$.05 &  0.19$\pm$.03 &  \textbf{0.23}$\pm$.05 \\
SenSR-E &  93$\pm$1 &  \textbf{0.11}$\pm$.04 &  \textbf{0.04}$\pm$.03 &  1.11$\pm$.15 \\
Baseline   &  \textbf{95}$\pm$1 &  7.01$\pm$.44 &  5.59$\pm$.37 &  4.10$\pm$.44 \\
Project    &  94$\pm$1 &  1.00$\pm$.56 &  1.99$\pm$.58 &  1.70$\pm$.41 \\
Sinha+     &  94$\pm$1 &  3.88$\pm$.26 &  1.42$\pm$.29 &  1.33$\pm$.18 \\
Bolukb.+ &  94$\pm$1 &  6.85$\pm$.53 &  4.33$\pm$.46 &  3.44$\pm$.29 \\
\bottomrule
\end{tabular}
	\end{minipage}
	\begin{minipage}{0.42\linewidth}
        \vspace{0.05in}
		\includegraphics[width=\linewidth,height=35mm]{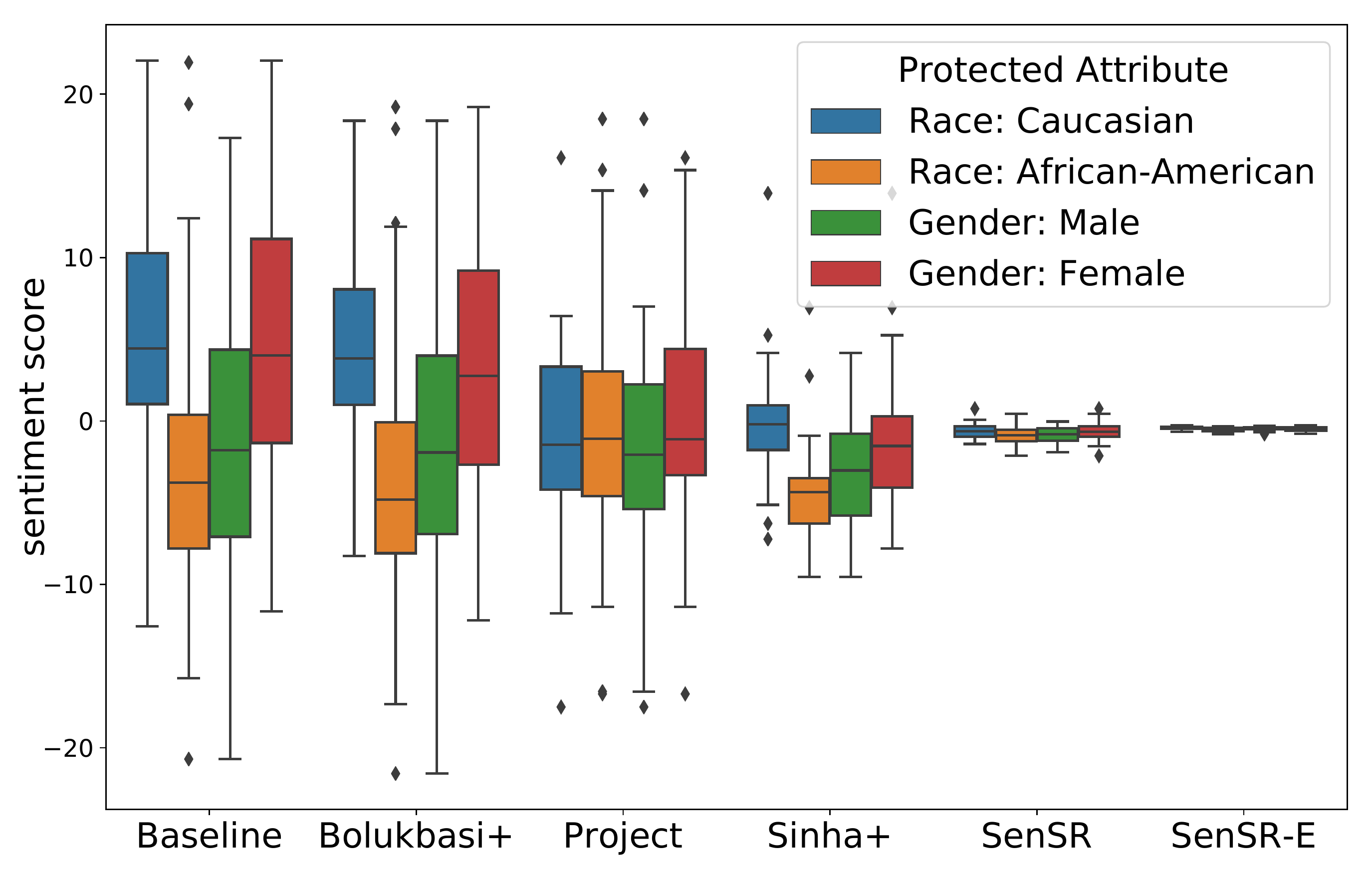}
		\vspace{-0.3in}
		\captionof{figure}{Box-plots of sentiment scores}
		\label{fig:sentiment}
		
	\end{minipage}
\end{table}

\paragraph{Problem formulation} We study the problem of classifying the sentiment of words using positive (e.g. `smart') and negative (e.g. `anxiety') words compiled by \citet{hu2004mining}. We embed words using 300-dimensional GloVe \citep{pennington2014glove} and train a one layer neural network with 1000 hidden units. Such classifier achieves 95\% test accuracy, however it entails major individual fairness violation. Consider an application of this sentiment classifier to summarizing customer reviews, tweets or news articles. Human names are typical in such texts and should not affect the sentiment score, hence we consider fair metric between any pair of names to be 0. Then sentiment score for all names should be the same to satisfy the individual fairness. To make a connection to group fairness, following the study of \citet{caliskan2017semantics} that reveals the biases in word embeddings, we evaluate the fairness of our sentiment classifier using male and female names typical for Caucasian and African-American ethnic groups. We emphasize that to satisfy individual fairness, the sentiment of \emph{any} name should be the same.

\paragraph{Comparison metrics} To evaluate the gap between two groups of names, $\mathcal{N}_0$ for Caucasian (or female) and $\mathcal{N}_1$ for African-American (or male), we report $\frac{1}{\abs{\mathcal{N}_0}}\sum_{n \in \mathcal{N}_0} (h(n)_1 - h(n)_0) - \frac{1}{\abs{\mathcal{N}_1}}\sum_{n \in \mathcal{N}_1} (h(n)_1 - h(n)_0)$, where $h(n)_k$ is logits for class $k$ of name $n$ ($k=1$ is the positive class). We use list of names provided in \cite{caliskan2017semantics}, which consists of 49 Caucasian and 45 African-American names, among those 48 are female and 46 are male. The gap between African-American and Caucasian names is reported as Race gap, while the gap between male and female names is reported as Gend. gap in Table \ref{table:sentiment}. As in \cite{sentiment_blog}, we also compare sentiment difference of two sentences: ``Let's go get Italian food'' and ``Let's go get Mexican food'', i.e. cuisine gap (abbreviated Cuis.\ gap in Table \ref{table:sentiment}), as a test of generalization beyond names. To embed these sentences we average their word embeddings.

\paragraph{Sensitive subspace} We consider embeddings of 94 names that we use for evaluation as sensitive directions, which may be regarded as utilizing the expert knowledge, i.e. these names form a list of words that an expert believes should be treated equally. Fair metric is then defined using an orthogonal complement projector of the span of sensitive directions as we discussed in Section \ref{sec:fair_w}. When expert knowledge is not available, or we wish to achieve general fairness for names, we utilize a side dataset of popular baby names in New York City.\footnote{titled ``Popular Baby Names'' and available from \url{https://catalog.data.gov/dataset/}} The dataset has 11k names, however only 32 overlap with the list of names used for evaluation. Embeddings of these names define a group of comparable samples that we use to learn sensitive directions with SVD (see Appendix \ref{sec:metricLearning} and Algorithm \ref{alg:1.1} for details). We take top 50 singular vectors to form the sensitive subspace. It is worth noting that, unlike many existing approaches in the fairness literature, we do not use any protected attribute information. Our algorithm only utilizes training words, their sentiments and a vanilla list of names.


\paragraph{Results} From the box-plots in Figure \ref{fig:sentiment}, we see that both race and gender gaps are significant when using the baseline neural network classifier. It tends to predict Caucasian names as ``positive'', while the median for African-American names is negative; the median sentiment for female names is higher than that for male names. We considered three other approaches to this problem: the algorithm of \cite{Bolukbasi2016Man} for pre-processing word embeddings; pre-processing via projecting out the sensitive subspace that we used for training SenSR (this is analogous to \cite{prost2019debiasing}); training a distributionally robust classifier with Euclidean distance cost \citep{sinha2017Certifying}. All approaches improved upon the baseline, however only SenSR can be considered individually fair. Our algorithm practically eliminates gender and racial gaps and achieves the notion of individual fairness as can be seen from almost equal predicted sentiment score for \emph{all} names. We remark that using expert knowledge (i.e. evaluation names) allowed SenSR-E (E for expert) to further improve both group and individual fairness. However we warn practitioners that if the expert knowledge is too specific, generalization outside of the expert knowledge may not be very good. In Table \ref{table:sentiment} we report results averaged across 10 repetitions with 90\%/10\% train/test splits, where we also verify that accuracy trade-off with the baseline is minor. In the right column we present the generalization check, i.e. comparing a pair of sentences unrelated to names. Utilizing expert knowledge led to a fairness over-fitting effect, however we still see improvement over other methods. When utilizing SVD of a larger dataset of names we observe better generalization. Our generalization check suggests that fairness over-fitting is possible, therefore datasets and procedure for verifying fairness generalization are needed.


\subsection{Adult}
\label{sec:adult}

\begin{table}[t]
\caption{Summary of \textit{Adult} classification experiments over 10 restarts}
\vspace{-10pt}
\label{table:adult}
\begin{center}
\begin{tabular}{lllllllll}
\toprule
{} &        B-Acc,\% &   $\text{S-Con.}$ &  $\text{GR-Con.}$ &      $\mathrm{Gap}_G^{\mathrm{RMS}}$ &       $\mathrm{Gap}_R^{\mathrm{RMS}}$ &     $\mathrm{Gap}_G^{\mathrm{max}}$ & $\mathrm{Gap}_R^{\mathrm{max}}$ \\
\midrule
SenSR &  78.9 & \textbf{.934} & .984 & \textbf{.068} & \textbf{.055} & \textbf{.087} & \textbf{.067} \\ 
Baseline &  \textbf{82.9} & .848 & .865 & .179 & .089 & .216 & .105 \\
Project & 82.7 & .868 & \textbf{1.00} & .145 & .064 & .192 & .086 \\
Adv. Debias. &  81.5 & .807 & .841 & .082 & .070 & .110 & .078  \\
CoCL &  79.0 & - & - &  .163 & .080 & .201  & .109  \\
\bottomrule
\end{tabular}
\end{center}
\end{table}


\paragraph{Problem formulation}
Demonstrating the broad applicability of SenSR outside of natural language processing tasks, we apply SenSR to a classification task on the  \textit{Adult}
\citep{Dua:2019} data set to predict whether an individual makes at least \$50k based on features like gender and occupation for approximately 45,000 individuals. Models that predict income without fairness considerations can contribute to the problem of differences in pay between genders or races for the same work. Throughout this section, gender (male or female) and race (Caucasian or non-Caucasian) are binary. 


\paragraph{Comparison metrics}

Arguably a classifier is individually unfair if the classifications for two data points that are the same on all features except demographic features are different. Therefore, to assess individual fairness, we report spouse consistency (S-Con.) and gender and race consistency (GR-Con.), which are measures of how often classifications change only because of differences in demographic features. For S-Con (resp. GR-con), we make 2 (resp. 4) copies of every data point where the only difference is that one is a husband and the other is a wife (resp. difference is in gender and race). S-Con (resp. GR-Con) is the fraction of corresponding pairs (resp. quadruples) that have the same classification. We also report various group fairness measures proposed by \citet{de2019bias} with respect to race or gender based on true positive rates, i.e. the ability of a classifier to correctly identify a given class. See Appendix \ref{supp:adult_group_metrics} for the definitions. We report $\text{Gap}_R^{\text{RMS}}$, $\text{Gap}_G^{\text{RMS}}$, $\text{Gap}_R^{\text{max}}$, and $\text{Gap}_G^{\text{max}}$ where $R$ refers to race, and $G$ refers to gender. We use balanced accuracy (B-acc) instead of accuracy\footnote{Accuracy is reported in Table \ref{table:fullAdult} in Appendix \ref{sec:adultExperimentDetails}.} to measure predictive ability since only 25\% of individuals make at least \$50k.


\paragraph{Sensitive subspace}
Let $\{(x_i, x_{g_i})\}_{i=1}^m$ be the set of features $x_i \in \mathbb{R}^{D}$ of the data except the coordinate for gender is zeroed and where $x_{g_i}$ indicates the gender of individual $i$. For $\gamma >0$, let $w_g = \argmin_{w \in \mathbb{R}^D} \frac{1}{m} \sum_{i = 1}^m -x_{g_i} (w^T x_i) + \log(1+ e^{w^T x_i}) + \gamma \|w\|_2,$ i.e. $w_g$ is the learned hyperplane that classifies gender given by regularized logistic regression. Let $e_g \in \mathbb{R}^D$ (resp. $e_r$) be the vector that is 1 in the gender (resp. race) coordinate and 0 elsewhere. Then the sensitive subspace is the span of [$w_g$, $e_g$, $e_r$]. See Appendix \ref{sec:fairMetricWithSA} for details.




\paragraph{Results}
See Table \ref{table:adult} for the average\footnote{The standard error is reported in the supplement. Each standard error is within $10^{-2}$.} of each metric on the test sets over ten 80\%/20\% train/test splits for Baseline, Project (projecting features onto the orthogonal complement of the sensitive subspace before training), CoCL \citep{de2019bias}, Adversarial Debiasing \citep{zhang2018Mitigating}, and SenSR. With the exception of CoCL \citep{de2019bias}, each classifier is a 100 unit single hidden layer neural network.  The Baseline clearly exhibits individual and group fairness violations. While SenSR has the lowest B-acc, SenSR is the best by a large margin for S-Con. and has the best group fairness measures. We expect SenSR to do well on GR-consistency since the sensitive subspace includes the race and gender directions. However, SenSR's individually fair performance generalizes: the sensitive directions do not directly use the husband and wife directions, yet SenSR performs well on S-Con. Furthermore, SenSR outperforms Project on S-Con and group fairness measures illustrating that SenSR does much more than just ignoring the sensitive subspace. CoCL only barely improves group fairness compared to the baseline with a significant drop in B-acc and while Adversarial Debiasing also improves group fairness, it is worse than the baseline on individual fairness measures illustrating that group fairness does not imply individual fairness.

\section{Summary}

We consider the task of training ML systems that are fair in the sense that their performance is invariant under certain perturbations in a sensitive subspace. This notion of fairness is a variant of individual fairness \citep{dwork2011Fairness}. One of the main barriers to the adoption of individual fairness is the lack of consensus on a fair metric for many ML tasks. To circumvent this issue, we consider two approaches to learning a fair metric from data: one for problems in which the sensitive attribute is observed, and another for problems in which the sensitive attribute is unobserved. Given a data-driven choice of fair metric, we provide an algorithm that provably trains individually fair ML models. 

\subsubsection*{Acknowledgments}
This work was supported by the National Science Foundation under grants DMS-1830247 and DMS-1916271.

\bibliography{YK,AB,MY}
\bibliographystyle{iclr2020_conference}

\newpage
\appendix
\section{Proofs}
\label{sec:proofs}

\subsection{Proof of Proposition \ref{prop:uniformConvergence}}

By the duality result of \citet{blanchet2016Quantifying}, for any $\eps > 0$,
\[
\begin{aligned}
&\sup_{P:W_*(P,P_*) \le \eps}\Ex_P\big[\ell(Z,\theta)\big] - \sup_{P:W(P,P_n) \le \eps}\Ex_P\big[\ell(Z,\theta)\big] \\
&\quad= \inf_{\lambda \ge 0}\{\lambda\eps + \Ex_{P_*}\big[\ell_\lambda^{c_*}(Z,\theta)\big]\big\} - (\lambda_n\eps + \Ex_{P_n}\big[\ell_{\lambda_n}^c(Z,\theta)\big]) \\
&\quad\le \Ex_{P_*}\big[\ell_{\lambda_n}^{c_*}(Z,\theta)\big] - \Ex_{P_n}\big[\ell_{\lambda_n}^c(Z,\theta)\big],
\end{aligned}
\]
where $\lambda_n \in \argmin_{\lambda \ge 0}\lambda\eps + \Ex_{P_n}\big[\ell_\lambda^c(Z,\theta)\big]$. By assumption A3, 
\[
\begin{aligned}
&|\ell_{\lambda_n}^{c_*}(z,\theta) - \ell_{\lambda_n}^c(z,\theta)| \\
&\quad= \left|\sup_{x_2\in\cX}\ell((x_2,y),\theta) - \lambda_n c_*((x,y),(x_2,y)) - \sup_{x_2\in\cX}\ell((x_2,y),\theta) - \lambda_n c((x,y),(x_2,y))\right| \\
&\quad\le \sup_{x_2\in\cX}\lambda_n|c_*((x,y),(x_2,y)) - c((x,y),(x_2,y))| \\
&\quad\le \lambda_n\delta_c\cdot D^2.
\end{aligned}
\]
This implies
\[
\begin{aligned}
&\sup_{P:W_*(P,P_*) \le \eps}\Ex_P\big[\ell(Z,\theta)\big] - \sup_{P:W(P,P_n) \le \eps}\Ex_P\big[\ell(Z,\theta)\big] \\
&\quad\le \Ex_{P_*}\big[\ell_{\lambda_n}^{c_*}(Z,\theta)\big] - \Ex_{P_n}\big[\ell_{\lambda_n}^{c_*}(Z,\theta)\big] + \lambda_n\delta_cD^2.
\end{aligned}
\]
This bound is crude; it is possible to obtain sharper bounds under additional assumptions on the loss and transportation cost functions. We avoid this here to keep the result as general as possible. 

Similarly, 
\[
\begin{aligned}
&\sup_{P:W(P,P_n) \le \eps}\Ex_P\big[\ell(Z,\theta)\big] - \sup_{P:W_*(P,P_*) \le \eps}\Ex_P\big[\ell(Z,\theta)\big] \\
&\quad\le \Ex_{P_n}\big[\ell_{\lambda_*}^c(Z,\theta)\big] - \Ex_{P_*}\big[\ell_{\lambda_*}^{c_*}(Z,\theta)\big] \\
&\quad\le \Ex_{P_n}\big[\ell_{\lambda_*}^{c_*}(Z,\theta)\big] - \Ex_{P_*}\big[\ell_{\lambda_*}^{c_*}(Z,\theta)\big] + \lambda_*\delta_cD^2,
\end{aligned}
\]
where $\lambda_*\in\argmin_{\lambda \ge 0}\{\lambda\eps + \Ex_{P_*}\big[\ell_\lambda^{c_*}(Z,\theta)\big]\big\}$.

\begin{lemma}[\cite{lee2017Minimax}]
\label{lem:lambdaBound}
Let $\tilde{\lambda} \in \argmin_{\lambda\ge 0}\lambda\eps + \Ex_P\big[\ell_\lambda^c(Z,\theta)\big]$. As long as the function in the loss class are $L$-Lipschitz with respect to $d_x$ (see Assumption A2), $\tilde{\lambda} \le \frac{L}{\sqrt{\eps}}$.
\end{lemma}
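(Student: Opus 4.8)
The plan is to use convexity of the dual objective $F(\lambda)\triangleq\lambda\eps+\Ex_P[\ell_\lambda^c(Z,\theta)]$ in $\lambda$, together with an a priori bound on the displacement of the inner maximizer, and then show that $F$ is increasing once $\lambda>L/\sqrt\eps$. First I would record that $F$ is convex: recalling that on same-label pairs the transport cost is $d_x^2$, so that $\ell_\lambda^c((x,y),\theta)=\sup_{x'\in\cX}\{\ell((x',y),\theta)-\lambda d_x^2(x',x)\}$, each map $\lambda\mapsto\ell((x',y),\theta)-\lambda d_x^2(x',x)$ is affine, hence the supremum is convex in $\lambda$; adding the linear term $\lambda\eps$ and taking the expectation preserves convexity. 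Coercivity is immediate from $\ell\ge 0$ (Assumption A2), which gives $\ell_\lambda^c(z)\ge\ell(z)\ge 0$ and hence $F(\lambda)\ge\lambda\eps\to\infty$, so a minimizer $\tilde\lambda$ exists on $[0,\infty)$; if $\tilde\lambda=0$ the claim is trivial, so assume $\tilde\lambda>0$.

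The crux is a displacement bound: for every $\lambda>0$, any maximizer $x^*(\lambda)$ of the inner problem $\sup_{x'\in\cX}\{\ell((x',y),\theta)-\lambda d_x^2(x',x)\}$ satisfies $d_x(x^*(\lambda),x)\le L/\lambda$. Indeed, since $x'=x$ is feasible and yields value $\ell((x,y),\theta)$, optimality of $x^*(\lambda)$ gives $\ell((x^*(\lambda),y),\theta)-\lambda d_x^2(x^*(\lambda),x)\ge\ell((x,y),\theta)$, so $\lambda d_x^2(x^*(\lambda),x)\le\ell((x^*(\lambda),y),\theta)-\ell((x,y),\theta)\le L\,d_x(x^*(\lambda),x)$ by the $L$-Lipschitz Assumption A2; cancelling one factor of $d_x(x^*(\lambda),x)$ (the bound is vacuous if it vanishes) yields $d_x(x^*(\lambda),x)\le L/\lambda$, uniformly in $z$.

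Morally, Danskin's theorem then says each subgradient of $F$ at $\lambda$ equals $\eps-\Ex_P[d_x^2(x^*(\lambda),Z)]\ge\eps-L^2/\lambda^2$, which is positive for $\lambda>L/\sqrt\eps$, forcing $\tilde\lambda\le L/\sqrt\eps$. The main obstacle is making this rigorous, because $\cX$ need only be bounded, not compact, and the inner maximizer need be neither attained nor unique, so Danskin does not apply off the shelf. The clean workaround is to argue directly with convexity and never differentiate: for $\lambda_2>\lambda_1>0$, let $x^*(\lambda_1)$ be a maximizer of the $\lambda_1$-inner problem (if the supremum is not attained, use a near-maximizer and pass to the limit). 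Feasibility of $x^*(\lambda_1)$ in the $\lambda_2$-problem gives $\ell_{\lambda_2}^c(z)\ge\ell((x^*(\lambda_1),y),\theta)-\lambda_2 d_x^2(x^*(\lambda_1),x)=\ell_{\lambda_1}^c(z)-(\lambda_2-\lambda_1)d_x^2(x^*(\lambda_1),x)$, whence $\ell_{\lambda_1}^c(z)-\ell_{\lambda_2}^c(z)\le(\lambda_2-\lambda_1)d_x^2(x^*(\lambda_1),x)\le(\lambda_2-\lambda_1)L^2/\lambda_1^2$ by the displacement bound. Taking expectations and rearranging gives $F(\lambda_2)-F(\lambda_1)\ge(\lambda_2-\lambda_1)(\eps-L^2/\lambda_1^2)$, which is nonnegative whenever $\lambda_1\ge L/\sqrt\eps$. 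Thus $F$ is nondecreasing on $[L/\sqrt\eps,\infty)$ and its minimizer satisfies $\tilde\lambda\le L/\sqrt\eps$; everything else is a routine chain of inequalities.
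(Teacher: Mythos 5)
Your proof is correct, but it takes a genuinely different route from the paper's. The paper never touches (near-)maximizers of the inner problem: it compares objective values directly, first using $\ell_{\tilde\lambda}^c(z,\theta)\ge\ell(z,\theta)$ to get $\tilde\lambda\eps\le\tilde\lambda\eps+\Ex_P\big[\ell_{\tilde\lambda}^c(Z,\theta)-\ell(Z,\theta)\big]$, then optimality of $\tilde\lambda$ to replace $\tilde\lambda$ by an arbitrary $\lambda\ge 0$ on the right, then the pointwise Lipschitz bound $\ell_\lambda^c(z,\theta)-\ell(z,\theta)\le\sup_{t\ge 0}\{Lt-\lambda t^2\}=L^2/(4\lambda)$, and finally the specific choice $\lambda=L/(2\sqrt{\eps})$ to conclude $\tilde\lambda\eps\le L\sqrt{\eps}$. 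You instead prove that the dual objective $F(\lambda)=\lambda\eps+\Ex_P\big[\ell_\lambda^c(Z,\theta)\big]$ is increasing beyond $L/\sqrt{\eps}$, via the displacement bound $d_x(x^*(\lambda),x)\le L/\lambda$ and a feasibility comparison between the $\lambda_1$- and $\lambda_2$-inner problems --- a rigorous stand-in for Danskin's theorem, as you note. Both arguments hinge on the same quadratic-versus-linear tradeoff $Lt-\lambda t^2$. What each buys: the paper's version is shorter and entirely sidesteps attainment and measurability of the inner supremum (it only ever upper-bounds it), whereas yours, at the cost of the near-maximizer limiting step, yields extra information of independent interest --- the dual objective is strictly increasing past the threshold, and the radius of the unfair map's perturbation shrinks like $L/\lambda$, which is exactly the ``budget'' intuition behind Algorithm \ref{alg:DRObjDual}.

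One small repair is needed at the very end. ``$F$ is nondecreasing on $[L/\sqrt{\eps},\infty)$'' only shows that \emph{some} minimizer lies in $[0,L/\sqrt{\eps}]$; a constant tail would be consistent with minimizers beyond the threshold, while the lemma asserts the bound for \emph{every} $\tilde\lambda\in\argmin$. Your own inequality $F(\lambda_2)-F(\lambda_1)\ge(\lambda_2-\lambda_1)(\eps-L^2/\lambda_1^2)$ already closes this gap: for $\lambda_1$ strictly greater than $L/\sqrt{\eps}$ the right-hand side is strictly positive, so $F$ is strictly increasing on the open ray, and a minimizer $\tilde\lambda>L/\sqrt{\eps}$ would give $F(\tilde\lambda)>F(\lambda_1)$ for any $\lambda_1$ with $L/\sqrt{\eps}<\lambda_1<\tilde\lambda$, a contradiction.
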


\begin{proof}
By the optimality of $\tilde{\lambda}$, 
\[
\begin{aligned}
\tilde{\lambda}\eps &\le \tilde{\lambda}\eps + \Ex_P\big[\sup_{x_2\in\cX}\ell((x_2,Y),\theta) - \tilde{\lambda}d_x(X,x_2)^2 - \ell((X,Y),\theta)\big] \\
&= \tilde{\lambda}\eps + \Ex_P\big[\ell_{\tilde{\lambda}}^c(Z,\theta) - \ell(Z,\theta)\big] \\
&\le \lambda\eps + \Ex_P\big[\ell_{\lambda}^c(Z,\theta) - \ell(Z,\theta)\big] \\
&= \lambda\eps + \Ex_P\big[\sup_{x_2\in\cX}\ell((x_2,Y),\theta) - \ell((X,Y),\theta) - \lambda d_x(X,x_2)^2\big]
\end{aligned}
\]
for any $\lambda \ge 0$. By Assumption A2, the right side is at most
\[
\begin{aligned}
\tilde{\lambda}\eps &\le \lambda\eps + \Ex_P\big[\sup_{x_2\in\cX}Ld_x(X,x_2) - \lambda d_x(X,x_2)^2\big] \\
&\le \lambda\eps + \sup_{t\ge 0}Lt - \lambda t^2
\end{aligned}
\]
We minimize the right side WRT $t$ (set $t = \frac{L}{2\lambda}$) and $\lambda$ (set $\lambda = \frac{L}{2\sqrt{\eps}}$) to obtain $\tilde{\lambda}\eps \le L\sqrt{\eps}$.
\end{proof}

By Lemma \ref{lem:lambdaBound}, we have
\[
\begin{aligned}
\sup_{P:W_*(P,P_*) \le \eps}\Ex_P\big[\ell(Z,\theta)\big] - \sup_{P:W(P,P_n) \le \eps}\Ex_P\big[\ell(Z,\theta)\big] \le \Ex_{P_*}\big[\ell_{\lambda_n}^{c_*}(Z,\theta)\big] - \Ex_{P_n}\big[\ell_{\lambda_n}^{c_*}(Z,\theta)\big] + \frac{L\delta_cD^2}{\sqrt{\eps}} \\
\sup_{P:W(P,P_n) \le \eps}\Ex_P\big[\ell(Z,\theta)\big] - \sup_{P:W_*(P,P_*) \le \eps}\Ex_P\big[\ell(Z,\theta)\big] \le \Ex_{P_n}\big[\ell_{\lambda_*}^{c_*}(Z,\theta)\big] - \Ex_{P_*}\big[\ell_{\lambda_*}^{c_*}(Z,\theta)\big] + \frac{L\delta_cD^2}{\sqrt{\eps}}.
\end{aligned}
\]
We combine the preceding bounds to obtain
\[
\begin{aligned}
&\bigg|\sup_{P:W(P,P_n) \le \eps}\Ex_P\big[\ell(Z,\theta)\big] - \sup_{P:W_*(P,P_*)\le \eps}\Ex_P\big[\ell(Z,\theta)\big]\bigg| \\
&\quad\le \sup_{f\in\cL^{c_*}}\big|{\textstyle\int_{\cZ}f(z)d(P_n - P_*)(z)}\big| + \frac{L\delta_cD^2}{\sqrt{\eps}},
\end{aligned}
\]
where $\cL^{c_*} = \{\ell_\lambda^{c_*}(\cdot,\theta):\lambda\in[0,\frac{L}{\sqrt{\eps}}],\theta\in\Theta\}$ is the DR loss class. In the rest of the proof, we bound $\sup_{f\in\cL^{c_*}}\big|\int_{\cZ}f(z)d(P_* - P_n)(z)\big|$ with standard techniques from statistical learning theory. Assumption A2 implies the functions in $\cF$ are bounded:
\[
0 \le \ell((x_1,y_1),\theta) - \cancel{\lambda d_x(x_1,x_1)} \le \ell_\lambda^c(z_1,\theta) \le \sup_{x_2\in\cX} \ell((x_2,y_1),\theta) \le M.
\]
This implies has bounded differences, so $\delta_n$ concentrates sharply around its expectation. By the bounded-differences inequality and a symmetrization argument,
\[
\sup_{f\in\cL^{c_*}}\big|{\textstyle\int_{\cZ}f(z)d(P_n - P_*)(z)}\big| \le 2\mathfrak{R}_n(\cL^{c_*}) + M(\frac{\log\frac2t}{2n})^{\frac12}
\]
WP at least $1-t$, where $\mathfrak{R}_n(\cF)$ is the Rademacher complexity of $\cF$: 
\[
\mathfrak{R}_n(\cF) = \Ex\bigg[\sup_{f\in\cF}\frac1n\sum_{i=1}^n\sigma_if(Z_i)\bigg].
\]

\begin{lemma}
\label{lem:DRLossClassRademacherComplexity}
The Rademacher complexity of the DR loss class is at most 
\[
\mathfrak{R}_n(\cL^c) \le \frac{24\mathfrak{C}(\cL)}{\sqrt{n}} + \frac{24LD^2}{\sqrt{n\eps}}.
\]
\end{lemma}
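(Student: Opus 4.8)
The plan is to combine a chaining (Dudley entropy integral) bound on the Rademacher complexity with a covering-number estimate for the DR loss class $\cL^c = \{\ell_\lambda^c(\cdot,\theta):\lambda\in[0,L/\sqrt{\eps}],\theta\in\Theta\}$. The starting point is the standard chaining inequality
\[
\mathfrak{R}_n(\cL^c)\le\frac{12}{\sqrt{n}}\int_0^\infty\sqrt{\log N_\infty(\cL^c,r)}\,dr,
\]
which is legitimate with uniform-metric covering numbers because the empirical $L^2(P_n)$ covering number is dominated by $N_\infty$. Everything then reduces to controlling $N_\infty(\cL^c,r)$ in terms of $N_\infty(\cL,\cdot)$ and a covering of the scalar interval $[0,L/\sqrt{\eps}]$; the range of $\lambda$ here is legitimate by Lemma \ref{lem:lambdaBound}.

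The heart of the argument is two Lipschitz-type properties of the $c$-transform $\ell_\lambda^c((x_i,y_i),\theta)=\sup_{x\in\cX}\{\ell((x,y_i),\theta)-\lambda d_x^2(x,x_i)\}$, both following from the elementary fact that $|\sup_x f(x)-\sup_x g(x)|\le\sup_x|f(x)-g(x)|$. First, for fixed $\lambda$ the map $\theta\mapsto\ell_\lambda^c(\cdot,\theta)$ is non-expansive in the uniform metric, so $\|\ell(\cdot,\theta)-\ell(\cdot,\theta')\|_\infty\le r/2$ forces $\|\ell_\lambda^c(\cdot,\theta)-\ell_\lambda^c(\cdot,\theta')\|_\infty\le r/2$ uniformly over $\lambda$; thus an $r/2$-cover of $\cL$ handles the $\theta$-direction. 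Second, for fixed $\theta$ the map $\lambda\mapsto\ell_\lambda^c(z,\theta)$ is $D^2$-Lipschitz, since $|\ell_\lambda^c(z,\theta)-\ell_{\lambda'}^c(z,\theta)|\le\sup_x|\lambda-\lambda'|\,d_x^2(x,x_i)\le|\lambda-\lambda'|D^2$ by assumption A1. Covering $[0,L/\sqrt{\eps}]$ by a grid of spacing $(r/2)/D^2$ therefore controls the $\lambda$-direction to accuracy $r/2$, and a triangle inequality gives
\[
N_\infty(\cL^c,r)\le N_\infty(\cL,r/2)\cdot\Big\lceil\tfrac{2LD^2}{r\sqrt{\eps}}\Big\rceil.
\]

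Finally I would take $\sqrt{\log(\cdot)}$ of this product, apply subadditivity $\sqrt{\log(ab)}\le\sqrt{\log a}+\sqrt{\log b}$, and integrate the two pieces separately. The substitution $u=r/2$ turns $\int_0^\infty\sqrt{\log N_\infty(\cL,r/2)}\,dr$ into $2\mathfrak{C}(\cL)$, producing the first term $24\mathfrak{C}(\cL)/\sqrt{n}$. For the second piece the integrand vanishes once $r$ exceeds the total variation $D^2L/\sqrt{\eps}$ of the $\lambda$-family (where the covering number becomes $1$), so the integral converges; rescaling $r=(2LD^2/\sqrt{\eps})\,t$ reduces it to a universal constant times $LD^2/\sqrt{\eps}$, and this constant is at most $2$, which yields the second term $24LD^2/\sqrt{n\eps}$.

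I expect the main obstacle to be the bookkeeping around the $\lambda$-direction: one must extract the $D^2$-Lipschitz constant from A1, insert the correct cutoff so that $\int_0^\infty\sqrt{\log\lceil 2LD^2/(r\sqrt{\eps})\rceil}\,dr$ is finite (the naive bound $1+a/r$ integrated to infinity diverges), and then verify that the resulting universal constant is small enough to be absorbed into the clean factor $24$. The $\theta$-direction, by contrast, is routine once the non-expansiveness of the sup-convolution is in hand.
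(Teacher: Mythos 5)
Your proposal follows the paper's proof essentially step for step: the paper likewise restricts $\lambda$ to $[0,L/\sqrt{\eps}]$ via Lemma~\ref{lem:lambdaBound}, establishes the same joint Lipschitz property of the $c$-transform in $(\theta,\lambda)$ (packaged there as sub-Gaussianity of the Rademacher process with respect to the pseudometric $\|\ell(\cdot,\theta_1)-\ell(\cdot,\theta_2)\|_\infty + D^2|\lambda_1-\lambda_2|$, proved by the same $\sup$-difference argument), bounds the covering number of $\cL^c$ by the same product of a cover of $\cL$ and a cover of the $\lambda$-interval, and then splits Dudley's entropy integral into the same two pieces. The only difference is presentational---you cover the function class directly in the uniform metric rather than working with a pseudometric on the index set---and your handling of the universal constant in the $\lambda$-integral is at the same (slightly loose) level of care as the paper's own evaluation.
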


\begin{proof}
To study the Rademacher complexity of $\cL^c$, we first show that the $\cL^c$-indexed Rademacher process $X_f \triangleq \frac1n\sum_{i=1}^n\sigma_if(Z_i)$ is sub-Gaussian WRT to a pseudometric. Let $f_1 = \ell_{\lambda_1}^c(\cdot,\theta_1)$ and $f_2 = \ell_{\lambda_2}^c(\cdot,\theta_2)$. Define 
\[
d_{\cL^c}(f_1,f_2) \triangleq \|\ell(\cdot,\theta_1) - \ell(\cdot,\theta_2)\|_\infty + D^2|\lambda_1 - \lambda_2|.
\]
We check that $X_f$ is sub-Gaussian WRT $d_{\cL^c}$:
\[
\begin{aligned}
&\Ex\big[\exp(t(X_{f_1} - X_{f_2}))\big] \\
&\quad= \Ex\big[\exp\big(\frac{t}{n}\sum_{i=1}^n\sigma_i(\ell_{\lambda_1}^c(Z_i,\theta_1) - \ell_{\lambda_2}^c(Z_i,\theta_2))\big)\big] \\
&\quad= \Ex\big[\exp\big(\frac{t}{n}\sigma(\ell_{\lambda_1}^c(Z,\theta_1) - \ell_{\lambda_2}^c(Z,\theta_2))\big)\big]^n \\
&\quad= \Ex\big[\exp\big(\frac{t}{n}\sigma(\sup_{x_1\in\cX}\inf_{x_2\in\cX}\ell((x_1,Y),\theta_1) - \lambda_1d_x(x_1,X)^2 - \ell((x_2,Y),\theta_2) + \lambda_2d_x(X,x_2)^2))\big)\big]^n \\
&\quad= \Ex\big[\exp\big(\frac{t}{n}\sigma(\sup_{x_1\in\cX}\ell((x_1,Y),\theta_1) - \ell((x_1,Y),\theta_2) + (\lambda_2 - \lambda_1)d_x(x_1,X)^2))\big)\big]^n \\
&\textstyle\quad\le \exp\big(\frac12t^2d_{\cL^c}(f_1,f_2)\big).
\end{aligned}
\]
Let $N(\cL^c,d_{\cL^c},\eps)$ be the $\eps$-covering number of $(\cL^c,d_{\cL^c})$. We observe
\begin{equation}\textstyle
N(\cL^c,d_{\cL^c},\eps) \le N(\cL,\|\cdot\|_\infty,\frac{\eps}{2})\cdot N([0,\frac{L}{\sqrt{\eps}}],|\cdot|,\frac{\eps}{2D^2})
\label{eq:DRLossCoveringNo}
\end{equation}
By Dudley's entropy integral, 
\[
\begin{aligned}
\mathfrak{R}_n(\cL^c) &\le \frac{12}{\sqrt{n}}\int_0^\infty\log N(\cL^c,d_{\cL^c},\eps)^{\frac12}d\eps \\
&\le \frac{12}{\sqrt{n}}\int_0^\infty\big(\log N({\textstyle\cL,\|\cdot\|_\infty,\frac{\eps}{2}}) + N\big({\textstyle[0,\frac{L}{\sqrt{\eps}}],|\cdot|,\frac{\eps}{2D^2}}\big)\big)^{\frac12}d\eps \\
&\le \frac{12}{\sqrt{n}}\bigg(\int_0^\infty\log N({\textstyle\cL,\|\cdot\|_\infty,\frac{\eps}{2}})^{\frac12}d\eps + \int_0^\infty N\big({\textstyle[0,\frac{L}{\sqrt{\eps}}],|\cdot|,\frac{\eps}{2D^2}}\big)^{\frac12}d\eps\bigg) \\
&\le \frac{24\mathfrak{C}(\cL)}{\sqrt{n}} + \frac{24LD^2}{\sqrt{n\eps }}\int_0^{\frac12}{\textstyle\log(\frac1\eps)}d\eps
\end{aligned}
\]
where we recalled \eqref{eq:DRLossCoveringNo} in the second step. We evalaute the integral on the right side to arrive at the stated bound: $\int_0^{\frac12}{\textstyle\log(\frac1\eps)}d\eps < 1$.
\end{proof}

By Lemma \ref{lem:DRLossClassRademacherComplexity},
\[
\sup_{f\in\cL^{c_*}}\big|{\textstyle\int_{\cZ}f(z)d(P_n - P_*)(z)}\big| \le \frac{48\mathfrak{C}(\cL)}{\sqrt{n}} + \frac{48LD^2}{\sqrt{n\eps}} + M(\frac{\log\frac2t}{2n})^{\frac12},
\]
which implies
\[
\begin{aligned}
&\bigg|\sup_{P:W(P,P_n) \le \eps}\Ex_P\big[\ell(Z,\theta)\big] - \sup_{P:W_*(P,P_*) \le \eps}\Ex_P\big[\ell(Z,\theta)\big]\bigg| \\
&\quad\le \frac{48\mathfrak{C}(\cL)}{\sqrt{n}} + \frac{48LD^2}{\sqrt{n\eps}} + \frac{L\delta_cD^2}{\sqrt{\eps}} + M(\frac{\log\frac2t}{2n})^{\frac12}.
\end{aligned}
\]
WP at least $1-t$.

\subsection{Proofs of Propositions \ref{prop:individuallyFair} and \ref{prop:certificate}}

\begin{proof}[Proof of Proposition \ref{prop:individuallyFair}]
It is enough to show 
\[\textstyle
\sup_{P:W_*(P,P_*) \le \eps}\Ex_P\big[\ell(Z,\htheta)\big] \le \delta^* + 2\delta_n
\]
because the loss function is non-negative. We have
\[
\begin{aligned}
\sup_{P:W_*(P,P_*) \le \eps}\Ex_P\big[\ell(Z,\htheta)\big] &\le \sup_{P:W(P,P_n) \le \eps}\Ex_P\big[\ell(Z,\htheta)\big] + \delta_n \\
&\le \sup_{P:W(P,P_n) \le \eps}\Ex_P\big[\ell(Z,\bar{\theta})\big] + \delta_n \\
&\le \sup_{P:W_*(P,P_*) \le \eps}\Ex_P\big[\ell(Z,\bar{\theta})\big] + 2\delta_n \\ 
&\le \delta^* + 2\delta_n.
\end{aligned}
\]
\end{proof}

\begin{proof}[Proof of Proposition \ref{prop:certificate}]
\[
\begin{aligned}
&\sup_{P:W_*(P,P_n) \le \eps}\big( \Ex_P\big[\ell(Z,\theta)\big] - \Ex_{P_n}\big[\ell(Z,\theta)\big]\big) - \sup_{P:W(P,P_*) \le \eps}\big(\Ex_P\big[\ell(Z,\theta)\big] - \Ex_{P_*}\big[\ell(Z,\theta)\big]\big) \\
&= \sup_{P:W_*(P,P_*) \le \eps}\Ex_P\big[\ell(Z,\theta)\big] - \sup_{P:W(P,P_n) \le \eps}\Ex_P\big[\ell(Z,\theta)\big] + \Ex_{P_*}\big[\ell(Z,\theta)\big] - \Ex_{P_n}\big[\ell(Z,\theta)\big] \\
&\le \delta_n + \Ex_{P_*}\big[\ell(Z,\theta)\big] - \Ex_{P_n}\big[\ell(Z,\theta)\big]
\end{aligned}
\]
The loss function is bounded, so it is possible to bound $\Ex_{P_*}\big[\ell(Z,\theta)\big] - \Ex_{P_n}\big[\ell(Z,\theta)\big]$ by standard uniform convergence results on bounded loss classes. 
\end{proof}
\section{Data-driven fair metrics}
\label{sec:dataDriveFairMetric}

\subsection{Learning the fair metric from observations of the sensitive attribute}
\label{sec:fairMetricWithSA}

Here we assume the sensitive attribute is discrete and is observed for a small subset of the training data. Formally, we assume this subset of the training data has the form $\{(X_i, K_i, Y_i)\}$, where $K_i$ is the sensitive attribute of the $i$-th subject. To learn the sensitive subspace, we fit a softmax regression model to the data
\[
\Pr(K_i = l\mid X_i) = \frac{\exp(a_l^TX_i+b_l)}{\sum_{l=1}^k \exp(a_l^TX_i+b_l)},\ l=1,\ldots,k,
\]
and take the span of $A=\begin{bmatrix} a_1 \ldots a_k \end{bmatrix}$ as the sensitive subspace to define the fair metric as 
\begin{equation}
d_x(x_1,x_2)^2 = (x_1 - x_2)^T(I - P_{\ran(A)})(x_1 - x_2).
\label{eq:projector_metric}
\end{equation}
This approach readily generalizes to sensitive attributes that are not discrete-valued: replace the softmax model by an appropriate generalized linear model. 


In many applications, the sensitive attribute is part of a user's demographic information, so it may not be available due to privacy restrictions. This does not preclude the proposed approach because the sensitive attribute is only needed to learn the fair metric and is neither needed to train the classifier nor at test time.

\subsection{Learning the fair metric from comparable samples}
\label{sec:metricLearning}

In this section, we consider the task of learning a fair metric from supervision in a form of comparable samples. This type of supervision has been considered in the literature on debiasing learned representations. For example, method of \cite{Bolukbasi2016Man} for removing gender bias in word embeddings relies on sets of words whose embeddings mainly vary in a gender subspace (\eg\ (king, queen)).

To keep things simple, we focus on learning a generalized Mahalanobis distance
\begin{equation}
d_x(x_1,x_2) = (\varphi(x_1) - \varphi(x_2))^T\hSigma(\varphi(x_1) - \varphi(x_2))^{\frac12},\label{eq:generalizedMahalanobisDistance}
\end{equation}
where $\varphi(x):\cX\to\reals^d$ is a \emph{known} feature map and $\hSigma\in\symm_+^{d\times d}$ is a covariance matrix. Our approach is based on a factor model 
\[
\varphi_i = A_*u_i + B_*v_i + \eps_i,
\]
where $\varphi_i\in\reals^d$ is the learned representation of $x_i$, $u_i\in\reals^K$ (resp.\ $v_i\in\reals^L$) is the sensitive/irrelevant (resp.\ relevant) attributes of $x_i$ to the task at hand, and $\eps_i$ is an error term. For example, in \cite{Bolukbasi2016Man}, the learned representations are the embeddings of words in the vocabulary, and the sensitive attribute is the gender bias of the words. The sensitive and relevant attributes are generally unobserved.

Recall our goal is to obtain $\hSigma$ so that \eqref{eq:generalizedMahalanobisDistance} is small whenever $v_1\approx v_2$. One possible choice of $\hSigma$ is the projection matrix onto the orthogonal complement of $\ran(A)$, which we denote by $P_{\ran(A)}$. Indeed,

\begin{align}
d_x(x_1,x_2)^2 &= (\varphi_1 - \varphi_2)^T(I - P_{\ran(A)})(\varphi_1 - \varphi_2) \label{eq:fair_metric} \\
& \approx (v_1 - v_2)^TB_*^T(I - P_{\ran(A)})B_*(v_1 - v_2),
\end{align}
which is small whenever $v_1\approx v_2$. Although $\ran(A)$ is unknown, it is possible to estimate it from the learned representations and groups of comparable samples by factor analysis. 

The factor model attributes variation in the learned representations to variation in the sensitive and relevant attributes. We consider two samples comparable if their relevant attributes are similar. In other words, if $\cI\subset[n]$ is (the indices of) a group of comparable samples, then
\begin{equation}
H\Phi_{\cI} = HU_{\cI}A_*^T  + \cancelto{\approx 0}{HV_{\cI}B_*^T} + HE_{\cI} \approx HU_{\cI}A_*^T + HE_{\cI},
\label{eq:centersubGroup}
\end{equation}
where $H =I_{\abs{\cI}} - \frac{1}{\abs{\cI}}1_{\abs{\cI}}1_{\abs{\cI}}^T$ is the centering or de-meaning matrix and the rows of $\Phi_{\cI}$ (resp.\ $U_{\cI}$, $V_{\cI}$) are $\varphi_i$ (resp.\ $u_i$, $v_i$). If this group of samples have identical relevant attributes, \ie\ $V_{\cI} = 1_{\abs{\cI}}v^T$ for some $v$, then $HV_{\cI}$ vanishes exactly. As long as $u_i$ and $\eps_i$ are uncorrelated (\eg\ $\Ex\big[u_i\eps_i^T\big] = 0$), \eqref{eq:centersubGroup} implies 
\[
\Ex\big[\Phi_{\cI}^TH\Phi_{\cI}\big] \approx A\Ex\big[U_{\cI}^THU_{\cI}\big]A^T + \Ex\big[E_{\cI}^THE_{\cI}\big],
\] 
This suggests estimating $\ran(A)$ from the learned representations and groups of comparable samples by factor analysis. We summarize our approach in Algorithm \ref{alg:1.1}.

\begin{algorithm}[H]
\caption{estimating $\hSigma$ for the fair metric}
\label{alg:1.1}
\begin{algorithmic}[1]
  \State {\bf Input:} $\{\varphi_i\}_{i=1}^n$, comparable groups $\cI_1,\dots,\cI_G$
  \State $\hA^T \in \argmin_{W_g,A}\{{\textstyle\frac12\sum_{g=1}^G\|H_g\Phi_{\cI_g} - W_gA^T\|_F^2}\}$
  \Comment{factor analysis}
  \State $Q \gets \texttt{qr}(\hA)$ 
  \Comment{get orthonormal basis of $\ran(\hA)$}
  \State $\hSigma \gets I_d - QQ^T$
\end{algorithmic}
\end{algorithm}

\section{SenSR implementation details}
\label{supp:sensr_implementation}
This section is to accompany the implementation of the SenSR algorithm
and is best understood by reading it along with the code implemented using TensorFlow.\footnote{\url{https://github.com/IBM/sensitive-subspace-robustness}} We discuss choices of learning rates and few specifics of the code. Words in \emph{italics} correspond to variables in the code and following notation in parentheses defines corresponding name in Table \ref{table:hyperparameters}, where we summarize all hyperparameter choices.

\paragraph{Handling class imbalance} Datasets we study have imbalanced classes. To handle it, on every \emph{epoch}($E$) (i.e. number of epochs) we subsample a \emph{batch\_size}($B$) training samples enforcing equal number of observations per class. This procedure can be understood as data augmentation.

\paragraph{Perturbations specifics} Our implementation of SenSR algorithm has two inner optimization problems --- subspace perturbation and full perturbation (when $\eps>0$). Subspace perturbation can be viewed as an initialization procedure for the attack. We implement both using Adam optimizer \citep{kingma2014adam} inside the computation graph for better efficiency, i.e. defining corresponding perturbation parameters as Variables and re-setting them to zeros after every epoch. This is in contrast with a more common strategy in the adversarial robustness implementations, where perturbations (i.e. attacks) are implemented using tf.gradients with respect to the input data defined as a Placeholder.

\paragraph{Learning rates} As mentioned above, in addition to regular Adam optimizer for learning the parameters we invoke two more for the inner optimization problems of SenSR. We use same learning rate of 0.001 for the parameters optimizer, however different learning rates across datasets for \emph{subspace\_step}($s$) and \emph{full\_step}($f$). Two other related parameters are number of steps of the inner optimizations: \emph{subspace\_epoch}($se$) and \emph{full\_epoch}($fe$). We observed that setting subspace perturbation learning rate too small may prevent our algorithm from reducing unfairness, however setting it big does not seem to hurt. On the other hand, learning rate for full perturbation should not be set too big as it may prevent algorithm from solving the original task. Note that full perturbation learning rate should be smaller than perturbation budget \emph{eps}($\eps$) --- we always use $\eps/10$. In general, malfunctioning behaviors are immediately noticeable during training and can be easily corrected, therefore we did not need to use any hyperparameter optimization tools.

\begin{table}[h]
\centering
\caption{SenSR hyperparameter choices in the experiments}
\label{table:hyperparameters}
\begin{tabular}{llllllll}
\toprule
{} & $E$ &  $B$  & $s$ & $se$ & $\eps$ & $f$ & $fe$ \\
\midrule
Sentiment &  4K & 1K & 0.1 & 10 & 0.1 & 0.01 & 10 \\
Adult &  12K & 1K & 10 & 50 & $10^{-3}$ & $10^{-4}$ & 40 \\
\bottomrule
\end{tabular}
\end{table}
\section{Additional Adult experiment details}
\label{sec:adultExperimentDetails}

\subsection{Preprocessing}
The continuous features in \textit{Adult} are the following: \texttt{age}, \texttt{fnlwgt}, \texttt{capital-gain}, \texttt{capital-loss}, \texttt{hours-per-week}, and \texttt{education-num}. The categorical features are the following: \texttt{workclass}, \texttt{education}, \texttt{marital-stataus}, \texttt{occupation}, \texttt{relationship}, \texttt{race}, \texttt{sex}, \texttt{native-country}. See \cite{Dua:2019} for a description of each feature. We remove \texttt{fnlwgt} and \texttt{education} but keep \texttt{education-num}, which is a integer representation of education. We do not use \texttt{native-country}, but use \texttt{race} and \texttt{sex} as predictive features. We treat \texttt{race} as binary: individuals are either White or non-White. For every categorical feature, we use one hot encoding. For every continuous feature, we standardize, i.e., subtract the mean and divide by the standard deviation. We remove anyone with missing data leaving 45,222 individuals.

This data is imbalanced: 25\% make at least \$50k per year. Furthermore, there is demographic imbalance with respect to race and gender as well as class imbalance on the outcome when conditioning on race or gender: 86\% of individuals are white of which 26\% make at least \$50k a year; 67\% of individuals are male of which 31\% make at least \$50k a year; 11\% of females make at least \$50k a year; and 15\% of non-whites make at least \$50k a year.

\subsection{Full experimental results}
See Tables \ref{table:fullAdult} and \ref{table:fullAdultCon} for the full experiment results. The tables report the average and the standard error for each metric on the test set for 10 train and test splits.
\begin{table}
\centering
\caption{Summary of \textit{Adult} classification experiments over 10 restarts}
\label{table:fullAdult}
\begin{tabular}{llllllllllll}
\toprule
{} &        Accuracy & B-TPR &     $\mathrm{Gap}_G^{\mathrm{RMS}}$ &       $\mathrm{Gap}_R^{\mathrm{RMS}}$ &     $\mathrm{Gap}_G^{\mathrm{max}}$ & $\mathrm{Gap}_R^{\mathrm{max}}$ \\
\midrule
SenSR & .787$\pm$.003 & .789$\pm$.003 & \textbf{.068}$\pm$.004 & \textbf{.055}$\pm$.003 & \textbf{.087}$\pm$.005 & \textbf{.067}$\pm$.004  \\
Baseline & \textbf{.813}$\pm$.001 & \textbf{.829}$\pm$.001 & .179$\pm$.004 & .089$\pm$.003 & .216$\pm$.003 & .105$\pm$.003 \\
Project & \textbf{.813}$\pm$.001 & .827$\pm$.001 & .145$\pm$.004 & .064$\pm$.003 & .192$\pm$.004 & .086$\pm$.004 \\
Adv. Debias. & .812$\pm$.001 & .815$\pm$.002 & .082$\pm$.005 & .070$\pm$.006 & .110$\pm$.006 & .078$\pm$.005  \\
CoCL & -  &  .790 &  .163 & .080 & .201  & .109  \\

\bottomrule
\end{tabular}
\end{table}

\begin{table}
\centering
\caption{Summary of individual fairness metrics in \textit{Adult} classification experiments over 10 restarts}
\label{table:fullAdultCon}
\begin{tabular}{lll}
\toprule
{} & $\text{Spouse Consistency}$ &  $\text{Gender and Race Consistency}$\\
\midrule
SenSR & \textbf{.934}$\pm$.012 & .984$\pm$.000 \\
Baseline & .848$\pm$.008 & .865$\pm$.004 \\
Project  & .868$\pm$.005 & \textbf{1}$\pm$0 \\
Adv. Debias. & .807$\pm$.002 & .841$\pm$.012 \\
\bottomrule
\end{tabular}
\end{table}

\subsection{Sensitive subspace}

To learn the hyperplane that classifies females and males, we use our implementation of regularized logistic regression with a batch size of 5k, 5k epochs, and .1 $\ell_2$  regularization.

\subsection{Hyperparameters and training}

For each model, we use the same 10 train/test splits where use 80\% of the data for training. Because of the class imbalance, each minibatch is sampled so that there are an equal number of training points from both the ``income at least \$50k class" and the ``income below \$50k class.''

\subsubsection{Baseline, Project, and SenSR}
See Table \ref{table:hyperparameters} for the hyperparameters we used when training Baseline, Project, and SenSR (Baseline and Project use a subset). Hyperparameters are defined in Appendix \ref{supp:sensr_implementation}.

\subsubsection{Advesarial debiasing}

We used \citet{zhang2018Mitigating}'s adversarial debiasing implementation in IBM's AIF360 package \citep{aif360-oct-2018} where the source code was modified so that each mini-batch is balanced with respect to the binary labels just as we did with our experiments and dropout was not used. Hyperparameters are the following: adversary loss weight $=.001$, num epochs $=500$, batch size $=1000$, and privileged groups are defined by binary gender and binary race.

\subsection{Group fair metrics}
\label{supp:adult_group_metrics}

Let $\mathcal{C}$ be a set of classes, $A$ be a binary protected attribute and $Y,\hat{Y} \in \mathcal{C}$ be the true class label and the predicted class label. Then for $a \in \{0,1\}$ and $c \in \mathcal{C}$ define $\mathrm{TPR}_{a,c} = \mathbb{P}(\hat{Y} = c | A = a, Y = c)$; $\mathrm{Gap}_{A,c} = \mathrm{TPR}_{0,c} - \mathrm{TPR}_{1,c}$; $\mathrm{Gap}_A^{\mathrm{RMS}} = \sqrt{\frac{1}{|C|} \sum_{c \in C} \mathrm{Gap}_{A,c}^2}$; $\mathrm{Gap}_A^{\mathrm{max}} = \argmax_{c \in C} | \mathrm{Gap}_{A,c}|$; $\mathrm{Balanced \  Acc } = \frac{1}{|C|} \sum_{c \in C} \mathbb{P}(\hat{Y} = c | Y = c)$. 

For Adult, we report $\text{Gap}_R^{\text{RMS}}$, $\text{Gap}_G^{\text{RMS}}$, $\text{Gap}_R^{\text{max}}$, and $\text{Gap}_G^{\text{max}}$ where $\mathcal{C}$ is composed of the two classes that correspond to whether someone made at least \$50k, $R$ refers to race, and $G$ refers to gender.

\end{document}